\newtheorem{example}{Example}
\crefname{equation}{Equation}{Equations}
\crefname{assumption}{Assumption}{Assumptions}
\let\originalleft\left
\let\originalright\right
\renewcommand{\left}{\mathopen{}\mathclose\bgroup\originalleft}
\renewcommand{\right}{\aftergroup\egroup\originalright}
\def\reals{\mathbb{R}} %
\def\<{\left\langle} %
\def\>{\right\rangle}
\def\norm#1{\left\|{#1}\right\|} %
\newcommand{\fronorm}[1]{\norm{#1}_{\text{F}}} %
\def\E{\mathbb{E}} %
\DeclareMathOperator{\Tr}{Tr} %
\def\Var{\mathrm{Var}} %
\newcommand%
{\HypGeo}{\textnormal{HypGeo}}
\newtheorem{theorem}{Theorem}
\newtheorem{lemma}[theorem]{Lemma}
\newtheorem{definition}[theorem]{Definition}
\renewenvironment{proof}{\noindent\textbf{Proof.}\hspace*{.3em}}{\qed\\}
\newenvironment{proof-sketch}{\noindent\textbf{Proof Sketch}
  \hspace*{1em}}{\qed\bigskip\\}
\newenvironment{proof-idea}{\noindent\textbf{Proof Idea}
  \hspace*{1em}}{\qed\bigskip\\}
\newenvironment{proof-of-lemma}[1][{}]{\noindent\textbf{Proof of Lemma {#1}}
  \hspace*{1em}}{\qed\\}
\newenvironment{proof-of-theorem}[1][{}]{\noindent\textbf{Proof of Theorem {#1}}
  \hspace*{1em}}{\qed\\}
\newenvironment{proof-attempt}{\noindent\textbf{Proof Attempt}
  \hspace*{1em}}{\qed\bigskip\\}
\newenvironment{remark}{\noindent\textbf{Remark.}
  \hspace*{0em}}{\smallskip}%
\newtheorem{proposition}[theorem]{Proposition}
\newtheorem{assumption}{Assumption}
\global\long\def\Nats{\mathbb{N}}
\global\long\def\Reals{\mathbb{R}}
\newcommand{\PP}{\mathbb{P}}
\newcommand{\EE}{\mathbb{E}}
\providecommand\given{} %
\newcommand\SetSymbol[1][]{
  \nonscript\,#1:\nonscript\,\mathopen{}\allowbreak}
\DeclarePairedDelimiterX\Set[1]{\lbrace}{\rbrace}%
{ \renewcommand\given{\SetSymbol[]} #1 }
\title{Risk of the Least Squares Minimum Norm Estimator under the Spike
  Covariance Model}
\date{}
\author[,1]{Yasaman Mahdaviyeh\thanks{Corresponding author: \texttt{yasamanmdv@cs.toronto.edu}}}
\author[2]{Zacharie Naulet}
\affil[1]{University of Toronto\protect\\Department of Computer
  Science,\protect\\ 
  Vector Institute \protect \\ Toronto, ON, Canada \vspace{1em}}
\affil[2]{Université Paris-Saclay\protect\\CNRS, Laboratoire de mathématiques d'Orsay\protect\\ 91405, Orsay, France.}
\begin{document}

\maketitle

\begin{abstract}%
	We study risk of the minimum norm linear least squares estimator in when the number
	of parameters $d$ depends on $n$, and $\frac{d}{n} \rightarrow \infty$. 
	We assume that data has an underlying low rank structure by restricting ourselves to spike covariance matrices, where a
	fixed finite number of eigenvalues grow with $n$ and are much larger than the
	rest of the eigenvalues, which are (asymptotically) in the same order. We show
	that in this setting risk of minimum norm least squares estimator vanishes in compare to risk of the null estimator. We give asymptotic and non 
	asymptotic upper bounds for this risk, and also
	leverage the assumption of spike model to give an analysis of the bias that leads to tighter bounds in compare to previous works.
\end{abstract}

\section{Introduction}

\label{sec:introduction-1}

One of the recent approaches to explain good performance of neural networks
has focused on their ability to fit training data perfectly (interpolate)
without over-fitting. It has been shown that this property is not unique to neural nets,
and that simpler class of models such as kernel regression could exhibit
this behaviour, too
\citet{belk18,LR18}.
One of the simpler models where
interpolation has been studied recently is the least squares solution for linear
regression. In this case, interpolation is only guaranteed to happen in the high
dimensional setting where the number of parameters $d$ exceeds 
the number of samples $n$;
therefore, the least squares solution is not necessarily unique. However, the \textit{minimum norm least squares} (MNLS) solution is unique,  and can be written
in closed form. Also, if we minimize the squared loss using gradient descent with initial parameters
set to zero, we recover the minimum norm solution \citep{HMRT19}. This has, at least
partially, motivated several works that study the risk of minimum norm least
squares estimator for linear regression. 

We study the risk of MNLS Estimator
under the spike covariance model of \cite{johnstone2001},
where few population eigenvalues are much larger than the rest.
We expect spike covariance models to represent underlying low dimensional structure of the data, or fast decay of the eigenspectrum
of the sample covariance or Gram matrix that is observed in some
datasets, see for example \citet{LR18} and  \citet{JM09} for plots of
eigenspectrums of an image data set and microarray data respectively. 
Also, a few recent works have shown that decay of eigenvalues plays
an important role in interpolation without over-fitting in case of 
linear and kernel regression \citep{BLT19, LR18}. These factors have 
inspired us to study the MNLS estimator under spike covariance model.

We are especially interested in the \textit{High Dimensional Low Sample Size}
(HDLSS) regime introduced in
\citet{hall2005geometric} and later studied thoroughly in
\citet{ahn2007high,JM09,shen2013surprising,SSZM16,SSM16} where 
$d/n \rightarrow \infty$. This setting fits
scenarios where the amount of information collected about individuals grows much
faster than the number of individuals, as in collection of genetic data, for example \citep{JM09}. 
Combined with spike covariance model with diverging spiked eigenvalues,
we are in a setting where even though increasingly more features are collected, 
the new features are highly correlated with existing ones, and do not add much 
new information, but rather reinforce existing knowledge. It is interesting 
to note that in this setting, intuitively, roles of $n$ and $d$ are almost swapped, since the large number of highly correlated features has the redundancy effect of large $n$ and small $n$ with diverging variance plays 
the role of small $d$. This interpretation of this phenomenon can be 
even seen in analysis of the covariance matrices in high dimensions when 
the analysis is done through the Gram matrix \citep{HMRT19, wang2017}  

We rely on the characterization of limits of the eigenvalues and eigenvectors of
sample covariance matrix given in this regime in \citet{SSM16, KL16} and other
similar works to give an asymptotic bound on the risk which vanishes relative to
asymptotic risk of the null estimator. Under less restrictive assumptions on the
covariance matrix but stronger assumptions on the distribution of data, we also
give a high probability bound on the predictive risk that depends on spectral
gaps of the covariance matrix.  Our analysis of the bias is novel and leads to a
tighter bound on the bias in the HDLSS regime in compare to other related
work. We show that consistent estimation of the spiked eigenvalues and
eigenvectors leads to small risk, and thus highlight the connection between
\textit{Principal Component Analysis} (PCA) and linear regression.

This paper is organized as follows. First in \cref{setup,sec:spike-model}, we
introduce the model and the main assumptions. In \cref{related_work}, we review
current related literature, and we establish the main technical notations in
\cref{sec:notations}. We give the main theorems in \cref{main_results}: the
first is an almost-sure convergence of the risk of the MNLS estimator, given in
\cref{sec:minim-norm-estim}, while the second is a non-asymptotic bound for the
risk, given in \cref{sec:non-asympt-bounds}. Finally, we discuss the results in
\cref{sec:discussion}.  The proofs for everything related to the MNLS estimator
are postponed to \cref{sec:proofs} , and to \cref{sec:asympt-sample-covar} for
existing results on estimation of spike covariance matrices.

\subsection{Set up and assumptions}
\label{setup}

Given $n$ independent, identically distributed  pairs of data points and labels $(X_i, Y_i) \in \reals^d \times \reals$,
we assume the following model
\begin{align}
Y_i = \theta^T X_i + \xi_i, \;\; i = 1, \dots, n,
\end{align} 
where $\xi_i$ are mean $0$, \textit{independent and identically distributed}
(i.i.d.) noise, with variance $\EE[\xi_i^2] = \sigma^2$. The goal is to recover
the parameter $\theta$, or rather to have accurate predictions of future
observations, based upon the knowledge of $(X_1,Y_1),\dots,(X_n,Y_n)$. It is
helpful to collect the covariates $X_i$ into the $n \times p$ design matrix
$\bm{X}$ whose rows are $X_1,\dots,X_n$. Similarly, we collect the responses
inside the vector $\bm{Y} = (Y_1,\dots,Y_n)$. Here we are interested in the
\textit{Minimum Norm Least Squares} (MNLS) estimator, defined as
\begin{align}
\label{min_norm_est}
\hat{\theta} = \arg \min_{\theta}
\norm{\theta} \quad \mathrm{s.t} \quad \bm{X}\theta = \bm{Y}.
\end{align}
This estimator can be written in closed form as%
\begin{align}
\hat{\theta} = (\bm{X}^T\bm{X})^\dagger \bm{X}^T \bm{Y},
\end{align}
where $(\bm{X}^T\bm{X})^{\dagger}$ denotes the Moore-Penrose pseudo-inverse of
$\bm{X}^T\bm{X}$. This is quite well known, see for instance
\citet{penrose1956best}. As we will see, this notion of risk depends heavily on
the $d\times d$ random matrix $\hat{\Sigma} \coloneqq n^{-1}\bm{X}^T\bm{X}$,
that is, the sample covariance matrix. We let $\Sigma \coloneqq \EE[\hat{\Sigma}]$
denote the corresponding population covariance matrix. We write
$\hat{U}\hat{\Lambda}\hat{U}^T \coloneqq \hat{\Sigma}$ the singular value
decomposition of $\hat{\Sigma}$, where
$\hat{\Lambda} = \mathrm{diag}(\hat{\lambda}_1,\dots,\hat{\lambda}_d)$ are the
singular values of $\hat{\Sigma}$ sorted in non-increasing order, \textit{i.e.}
$\hat{\lambda}_1 \geq \hat{\lambda}_2 \geq \dots \hat{\lambda}_d$, and
$\hat{U}= (\hat{u}_1,\dots,\hat{u}_d)$. Under fairly weak assumptions on the
distribution of $\bm{X}$, it is immediately seen that $\hat{\Sigma}$ has rank
equal to $n$, thereby $\hat{\lambda}_{n+1},\dots,\hat{\lambda}_d =
0$. Similarly, we let $U\Lambda U^T \coloneqq \Sigma$ denote the singular value
decomposition of $\Sigma$, where
$\Lambda = \mathrm{diag}(\lambda_1,\dots,\lambda_d)$ are the singular values of
$\Sigma$ sorted in non-increasing order, and $U = (u_1,\dots,u_d)$ 
the eigenvectors. Here we
assume the following on the distribution of $\bm{X}$.

\begin{assumption}[Distribution of $X_i$, weak]
	\label{ass:iid_comp}
	As in \citet[Assumption 1]{SSM16}, we assume that $X_1,\dots,X_n$ are
	independent, identically distributed (iid) and the random vectors
	$Z_i \coloneqq \Lambda^{-1/2}U^TX_i$ have iid entries with zero mean, unit
	variance, and finite fourth moments, \textit{i.e.}
	$Z_i = (Z_{i,1},\dots,Z_{i,d})$ where the variables $(Z_{i,k})$ are iid with
	$\EE[Z_{i,k}] = 0$, $\EE[Z_{i,k}^2] = 1$, and $\EE[Z_{i,k}^4] < \infty$.
\end{assumption}

We use \Cref{ass:iid_comp} and \Cref{ass:1}, which is introduced in next section  to
establish almost-sure bounds for the MNLS estimator  in \cref{sec:minim-norm-estim}. Non-asymptotic upper bounds
on the risk are obtained in \cref{sec:non-asympt-bounds}, under the following
additional structure on the distribution of $\bm{X}$.

\begin{assumption}[Distribution of $X_i$, strong]
	\label{ass:iid_comp_strong}
	The \cref{ass:iid_comp} holds, and in addition the random variables
	$(Z_{i,k})$ are sub-gaussian. That is, there exists $\nu > 0$ such that
	$\log \EE[e^{\lambda Z_{i,k}}] \leq \frac{\lambda^2 \nu}{2}$ for all
	$\lambda \in \Reals$.
\end{assumption}

\subsection{Spike model and HDLSS regime}
\label{sec:spike-model}

The spike model was first studied in \cite{johnstone2001}, where a fixed number
of eigenvalues are greater than one, and the rest are one. It was motivated by
some cases such as functional data analysis \citep{johnstone2001}, and financial
data \citep{BAIK20061382}, where empirically, first few eigenvalues of sample
covariance matrix are much larger than the rest. A spike covariance matrix can
also be thought of as a perturbation of a low rank matrix, that is
$\Sigma = M + \delta I$, where $M$ is a rank $m \ll n$ matrix with large
eigenvalues.

In the context of PCA,
\citet{hall2005geometric,ahn2007high,JM09,shen2013surprising,SSZM16,SSM16}
introduce the so-called HDLSS regime as a realistic model for data, where they assume that $d \equiv d^{(n)}$ with
$d^{(n)}/n \to \infty$, and that the eigenvalues of
$\Sigma \equiv \Sigma^{(n)} = \sum_{j=1}^d\lambda_j^{(n)}u_j^{(n)} u_j^{(n)T}$
are such that a few of them are very large and dominate the rest of the
eigenvalues. In particular, they study the spike covariance model with diverging spiked eigenvalues, where the amount of \textit{signal}
increases as $n \to \infty$, in the sense that for the first $\bar{m}$
eigenvalues we have $\lambda_1^{(n)} \to \infty$, $\dots$,
$\lambda_{\bar{m}}^{(n)} \to \infty$. Formally, the assumption is the following.
\begin{assumption}[HDLSS]
	\label{ass:1}
	$\lim_{n\to \infty}d^{(n)}/n = \infty$, and there exists $\bar{m} \in \Nats$
	and $c_1,c_2 > 0$ such that the sequence of eigenvalues
	$\lambda_1^{(n)}> \lambda_2^{(n)} >\dots >\lambda_d^{(n)}$ satisfies
	$\lim_{n\to \infty} n\lambda_{\bar{m}}^{(n)}/d^{(n)} = \infty$,
	$\lim_{n\to \infty}\lambda_j^{(n)}/\lambda^{(n)}_{j+1} > 0$ for
	$j=1,\dots,\bar{m}$, while $\lim_{n\to \infty}\lambda_{\bar{m}+1}^{(n)} = c_1$
	and $\lim_{n\to \infty}\lambda^{(n)}_d = c_2$.
\end{assumption}
\cite{SSM16,SSZM16} show that the first $\bar{m}$ samples eigenvalues
$\hat{\lambda}_1,\dots,\hat{\lambda}_{\bar{m}}$ of $\hat{\Sigma}$ are consistent
for $\lambda_1^{(n)},\dots,\lambda^{(n)}_{\bar{m}}$, in the sense that
$\lim_{n\to\infty}\max_{j=1,\dots,\bar{m}}\hat{\lambda}_j/\lambda_j^{(n)} = 1$,
and also $\hat{u}_1,\dots,\hat{u}_{\bar{m}}$ are consistent for
$u_1^{(n)},\dots,u^{(n)}_{\bar{m}}$. We mainly use these results to bound
the risk of the MNLS estimator.

In the sequel, and thus to avoid the heavy notations, we drop the superscript
$(n)$ and we write $d$ for $d^{(n)}$, $\Sigma$ for $\Sigma^{(n)}$, $\lambda_j$
for $\lambda_j^{(n)}$, and $u_j$ for $u_j^{(n)}$, while keeping in mind that
those are considered to be sequences indexed by $n$. The same goes
for $\theta^{(n)} \equiv \theta$.

The spike covariance model might seem like a very specific case to study. 
We borrow here \citet[Example~4.1]{JM09}, which shows that the \cref{ass:1} on
$\Sigma$ can be encountered even by really simple covariance designs. We refer to the
aforementioned papers for more examples and thorough discussions.

\begin{example}
	\label{ex:11}
	In the case where $\Sigma_{i,i} = 1$ for $i=1,\dots,d$ and $\Sigma_{i,j} = a$
	for $i\ne j$, then the first eigenvalue of $\Sigma$ is
	$\lambda_1 = 1 + (d-1)a$, while $\lambda_2,\dots,\lambda_d = 1 - a$. Then we
	have $\frac{d}{n\lambda_1} \to 0$, $\frac{d}{n \lambda_2} \to \infty$,
	$\lambda_1 = \Theta(d)$.%
\end{example}

\subsection{Related work}
\label{related_work}

Here, we give a short overview of existing works that give bounds on the risk of
MNLS estimator in a high dimensional regime (where interpolation could
happen).%

\medskip%
\citet{Belkin2019TwoMO} study mean squared error in a finite sample and dimension
setting with isotropic Gaussian data. Under fairly general setting,
\citet{HMRT19} give asymptotic risk bounds when
$\frac{n}{d} \rightarrow \alpha \in (0, \infty)$ for general covariance
matrices, assuming their operator norm is bounded.  Their bound in the general
case depends on some results in random matrix theory, making it difficult to
interpret. This is expected if there are no restrictions on structure of the
covariance matrix. They also give more explicit bounds for some special
covariance matrices, including an equicorrelated covariance matrix, which is a
single spike model. For this covariance matrix, their results would be the same
as ours if we take limit of $\frac{d}{n} \rightarrow \infty$. They compare 
the risk bounds they get to \textit{signal to noise ratio} (SNR) which turns 
out to be equal to risk of the null estimator within constant factors. Likewise, 
when we give the
asymptotic bound, we look at the ratio of bias to risk of the null estimator.

\citet{BLT19} study this (unnormalized) risk for Gaussian data in an infinite
dimensional Hilbert space, with finite samples, and give conditions on the
covariance matrix such that the risk is small with high probability. They call
covariance matrices that meet these conditions benign. Intuitively, for a
covariance matrix to be benign, the eigenvalues must decay but not too
fast. \citet{BLT19} break down the spectrum of covariance matrix into the larger eigenvalues and the tail, and their bound depends on where the spectrum is partitioned. 
Spike covariance matrices do have some properties of benign covariance matrices, even though their eigenvalues don't decay to zero. Applying their partitioning of the spectrum to the spike covariance matrices considered
here, we get spiked and non spiked eigenvalues. Indeed, their bounds
can still hold under our setting; this will be discussed more thoroughly in
\cref{sec:more-comparison-with}.

Finally, we mention that the main novelty and essential difference with
\citet{HMRT19,BLT19} resides in the way we analyze the bias the MNLS
estimator. For instance, in \cite{BLT19}, bias is bounded by operator norm of
the difference between sample and population covariance matrices, which is in
turn bounded in probability using the bounds in \citet{KL16}. Here, we leverage
the extra structure we assume on the covariance matrix to perform a finer
analysis of the bias. In particular, we rely on findings of \citep{SSM16}
that the first $m$ eigenvalues and eigenvectors of $\hat{\Sigma}$ are
asymptotically consistent for their corresponding population counterparts. We
emphasize that the bias depends on which subspace is not spanned by data, that is, the null space of $\bm{X}$, and the norm of the projection of the
parameter
$\theta$ into the null space of $\bm{X}$. Sample covariance eigenvectors
corresponding to nonzero eigenvalues form a basis for the row space of $\bm{X}$,
thus characterizing them enables us to examine bias closely.

\subsection{Notations}
\label{sec:notations}

We assume that all random variables are defined on a common probability space
$(\Omega, \mathcal{F}, \PP)$. We write expectations under $\PP$ as $\EE$. The
symbols $\EE_{\theta}[f(\bm{Y}) \mid \bm{X}]$ means that $\bm{Y}$ is assumed to
be $\bm{X}\theta + \bm{\xi}$, where $\bm{\xi} \sim N(0,\sigma^2I)$. For a matrix
$A$, we let $\sigma_{\min}(A)$ and $\sigma_{\max}(A)$ denote the smallest and
largest singular values of $A$ respectively. We also write $\sigma_j(A)$ the
$j$-th singular value of $A$, orderered such that
$\sigma_1(A) \geq \sigma_2(A) \geq \dots$. We denote the trace operator by
$\Tr$. We rewrite the covariance matrix as $\Sigma = \sum_{j=1}^d \lambda_j P_j$
where $P_j$ is the projection operator onto the $j$-th eigenspace of
$\Sigma$. Similarly we let
$\hat{\Sigma} = \sum_{j=1}^n \hat{\lambda}_j \hat{P}_j$.

\section{Main results}\label{main_results}

\subsection{Almost-sure bounds on risk of the MNLS estimator}
\label{sec:minim-norm-estim}

Let $X_{new}$ be a new sample from the same distribution as 
$X_1, \dots , X_n$. The expected error (at $\theta$) of the MNLS estimator can be decomposed into
the two terms%
\begin{equation}
\label{decomposition}
R_{\bm{X}}(\hat{\theta},\theta) \coloneqq%
\E_{\theta} [(X_{new}^T \theta - X_{new}^T \hat{\theta})^2 \mid \bm{X}] %
=
\theta^T (I - \hat{\Sigma}^\dagger \hat{\Sigma}) 
\Sigma
(I - \hat{\Sigma}^\dagger \hat{\Sigma}) \theta + 
\frac {\sigma^2}{n} \Tr(
\hat{\Sigma}^\dagger
\Sigma
),
\end{equation}
which are called bias
$B_{\bm{X}}(\hat{\theta},\theta)^2 \coloneqq \theta^T(I - \hat{\Sigma}^\dagger
\hat{\Sigma}) \Sigma (I - \hat{\Sigma}^\dagger \hat{\Sigma}) \theta$ and
variance
$V_{\bm{X}}(\hat{\theta},\theta) \coloneqq
\frac{\sigma^2}{n}\Tr(\hat{\Sigma}\Sigma)$, respectively. This is a standard 
derivation given in \cref{decomposition_expanded} . Theorem below gives an upper bound on asymptotic risk of the
MNLS estimator \eqref{min_norm_est}. Note that bias is essentially variance of
(noiseless) response after $\theta$ and $\Sigma$ are projected into some
subspace.  Intuitively, \cref{upper_bound_theorem} below shows that if spike eigenvalues grow
fast enough, then asymptotically, we incur no bias in the subspace spanned by
the spike eigenvectors. More specifically, we consider the maximum risk of
$\hat{\theta}$ over the classes of parameters, for $m \in \Nats$,
$0 < \delta < 1$, and $L > 0$%
\begin{equation}
\label{eq:74}
\mathcal{A}(m,\delta)%
\coloneqq \Set*{\theta \in \Reals^d \given  %
	\textstyle\sum_{j=m+1}^d\|P_j\theta\|^2 \leq \delta \|\theta\|^2}.
\end{equation}

\begin{theorem}
	\label{upper_bound_theorem}
	Under \cref{ass:iid_comp,ass:1} for all $L > 0$, for all $0 \leq \delta < 1$,
	it holds almost-surely as $n\to \infty$,
	\begin{equation}
	\label{eq:51}
	\sup_{\theta \in \mathcal{A}(\bar{m},\delta)}\frac{B_{\bm{X}}(\hat{\theta},\theta)^2}{\Var(X_1^T\theta)}%
	=  O\Big( \frac{1}{n} \bigvee \frac{d}{n\lambda_1}\Big)%
	\times \Big\{ \delta + O\Big( \frac{1}{n} \bigvee \frac{d}{n\lambda_1}\Big)
	\Big\} = o(1),
	\end{equation}
	and, almost-surely as $n\to \infty$,
	\begin{equation}
	\label{eq:65}
	\sup_{\theta\in \Reals^d}V_{\bm{X}}(\hat{\theta},\theta)%
	\leq \frac{\sigma^2\bar{m}}{n}\Big\{1 + o(1) +
	O\Big(\sqrt{\frac{n\lambda_1}{d}\Big(1 \bigvee \frac{\lambda_1}{d}} \Big)
	\Big\}%
	+ O\Big( \frac{\sigma^2 n}{d} \Big).
	\end{equation}
	Consequently if $\lambda_1 = o\big(\frac{d\sqrt{n}}{\sigma^2}\big)$ and
	$\sigma^2= o\big(\frac{d}{n}\big)$, then
	$R_{\bm{X}}(\hat{\theta},\theta) = o(\Var(X_1^T\theta))$.
\end{theorem}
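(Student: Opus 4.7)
The plan is to decompose the risk as in \eqref{decomposition} and treat bias and variance separately, in each case leveraging the asymptotic consistency of the spike sample eigenvalues and eigenvectors from \citet{SSM16} (recalled in \cref{sec:asympt-sample-covar}). Let $Q \coloneqq I - \hat\Sigma^\dagger \hat\Sigma$ denote the orthogonal projector onto the null space of $\bm X$, and note that $Q = \sum_{k>n}\hat u_k\hat u_k^T$. Set $P_W \coloneqq \sum_{j\leq \bar m} u_j u_j^T$, $\hat P_W \coloneqq \sum_{j\leq \bar m}\hat u_j \hat u_j^T$, and $\epsilon_n \coloneqq \tfrac1n \vee \tfrac{d}{n\lambda_1}$. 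Under \cref{ass:1}, $\epsilon_n \to 0$, $1 - (\hat u_j^T u_j)^2 = O(\epsilon_n)$ a.s.\ for $j \leq \bar m$, $\hat\lambda_j/\lambda_j \to 1$ for $j\leq \bar m$, and the bulk sample eigenvalues satisfy $\hat\lambda_k \gtrsim \Tr(\Sigma_\perp)/n \asymp d/n$ for $\bar m < k \leq n$.

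For the bias I write $B_{\bm X}(\hat\theta,\theta)^2 = \|\Sigma^{1/2}Q\theta\|^2 = \sum_j \lambda_j (u_j^T Q\theta)^2$ and split the sum at $j = \bar m$. Orthonormality of $\{\hat u_k\}$ yields $u_j^T Q\theta = \sum_{k>n}(u_j^T \hat u_k)(\hat u_k^T\theta)$, so Cauchy--Schwarz gives
\[
(u_j^T Q\theta)^2 \leq \bigl(\textstyle\sum_{k>n}(u_j^T\hat u_k)^2\bigr)\|Q\theta\|^2 \leq \bigl(1-(\hat u_j^T u_j)^2\bigr)\|Q\theta\|^2 = O(\epsilon_n)\,\|Q\theta\|^2
\]
for $j \leq \bar m$. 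To bound $\|Q\theta\|$ on $\mathcal A(\bar m,\delta)$ I decompose $\theta = P_W\theta + (I-P_W)\theta$ and use $Q\hat P_W = 0$ (since $\mathrm{range}(\hat P_W)\subset \mathrm{row}(\bm X)$) to rewrite $QP_W = Q(P_W - \hat P_W)$; the $\sin\Theta$-type estimate $\|P_W-\hat P_W\|^2 = O(\epsilon_n)$ then gives $\|Q\theta\|^2 = O(\epsilon_n+\delta)\|\theta\|^2$. Summing the spike contributions produces $O(\lambda_1 \epsilon_n(\epsilon_n+\delta))\|\theta\|^2$; for $j > \bar m$, using $\lambda_j \leq c_1$ and $\sum_{j>\bar m}(u_j^T Q\theta)^2 \leq \|Q\theta\|^2$ I obtain $O(\epsilon_n+\delta)\|\theta\|^2$. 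Dividing by $\Var(X_1^T\theta) \geq \lambda_{\bar m}(1-\delta)\|\theta\|^2 \asymp \lambda_1\|\theta\|^2$ on $\mathcal A(\bar m,\delta)$ and observing that $1/\lambda_1 = O(\epsilon_n)$ under \cref{ass:1} (so the bulk is absorbed into the spike rate) yields \eqref{eq:51}.

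For the variance I rewrite $V_{\bm X}(\hat\theta,\theta) = \tfrac{\sigma^2}{n}\sum_{k=1}^n \hat u_k^T\Sigma\hat u_k/\hat\lambda_k$ and again split at $k=\bar m$. For $k \leq \bar m$, the expansion $\hat u_k^T\Sigma\hat u_k = \lambda_k(\hat u_k^T u_k)^2 + \sum_{j\ne k}\lambda_j(\hat u_k^T u_j)^2$ combined with $\sum_{j\ne k}(\hat u_k^T u_j)^2 \leq 1-(\hat u_k^T u_k)^2 = O(\epsilon_n)$ and $\lambda_j \lesssim \lambda_1$ gives $\hat u_k^T\Sigma\hat u_k = \lambda_k + O(\lambda_1\epsilon_n)$; combining with the explicit rate for $\hat\lambda_k/\lambda_k$ from \citet{SSM16} yields the $\tfrac{\sigma^2 \bar m}{n}\{1+o(1)+O(\sqrt{\cdots})\}$ term in \eqref{eq:65}. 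For $\bar m < k \leq n$, orthogonality $\hat u_k \perp \hat u_1,\dots,\hat u_{\bar m}$ with $\|P_W-\hat P_W\|^2 = O(\epsilon_n)$ forces $(\hat u_k^T u_j)^2 = O(\epsilon_n)$ for $j \leq \bar m$, whence $\hat u_k^T\Sigma\hat u_k \leq c_1 + O(\lambda_1\epsilon_n)$; combined with the bulk lower bound $\hat\lambda_k \gtrsim d/n$ and summed over the $n-\bar m$ bulk terms, this gives $O(\sigma^2 n/d)$. The final consequence follows by checking, under $\lambda_1 = o(d\sqrt n/\sigma^2)$ and $\sigma^2 = o(d/n)$, that each non-leading term in both displays is $o(\lambda_1)$, which is the essential scale of $\Var(X_1^T\theta)$ on $\mathcal A(\bar m,\delta)$.

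The main obstacle is extracting the sharp rates for $\|P_W - \hat P_W\|$ ($\sin\Theta$-type) and for the bulk sample eigenvalues $\hat\lambda_k$ with $k>\bar m$ from the SSM16 machinery---the latter is delicate in HDLSS because it is driven not by any single non-spike population eigenvalue but by the aggregate $\Tr(\Sigma_\perp)/n \asymp d/n$. The other subtlety is avoiding the crude operator-norm bound $B^2 \leq \|\Sigma - \hat\Sigma\|_{\mathrm{op}}\|\theta\|^2$ used by \citet{BLT19}: the identity $Q \hat P_W = 0$ is precisely what replaces the $O(\|\Sigma - \hat\Sigma\|_{\mathrm{op}})$ estimate by the sharper $\sin\Theta$ discrepancy, producing the extra factor of $\epsilon_n+\delta$ in \eqref{eq:51} that gives the improvement over prior work.
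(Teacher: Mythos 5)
Your bias analysis is essentially sound and close in spirit to the paper's: both rest on the consistency $\|\hat P_j-P_j\|=O(\sqrt{\epsilon_n})$ from the SSM16/KL16 machinery, the identity that the null-space projector $Q$ annihilates the span of $\hat u_1,\dots,\hat u_{\bar m}$, and the normalization $\Var(X_1^T\theta)\asymp\lambda_1\|\theta\|^2$ on $\mathcal A(\bar m,\delta)$. Your route through $Q$ and $\hat P_W$ is a bit more compact than the paper's Lemma~\ref{lem:1} decomposition, but the two arguments are exchanging the same currency: the crude $\|\hat\Sigma-\Sigma\|$ estimate of \citet{BLT19} gets replaced by the $\sin\Theta$ discrepancy of the spike eigenspace.

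There is a real gap in the variance bound, in the bulk part $\bar m<k\leq n$. You apply the pointwise estimate $(\hat u_k^T u_j)^2 = O(\epsilon_n)$ for each of the $n-\bar m$ values of $k$, obtaining $\hat u_k^T\Sigma\hat u_k\leq c_1+O(\lambda_1\epsilon_n)$, and then claim the resulting sum is $O(\sigma^2 n/d)$. It is not: the spike contribution gives
\begin{equation}
\frac{\sigma^2}{n}\sum_{k=\bar m+1}^{n}\frac{\sum_{j\leq\bar m}\lambda_j(\hat u_k^Tu_j)^2}{\hat\lambda_k}
\leq \frac{\sigma^2}{n}\cdot\frac{(n-\bar m)\cdot O(\lambda_1\epsilon_n)}{\hat\lambda_n}
\asymp \frac{\sigma^2\,n\lambda_1\epsilon_n}{d},
\end{equation}
and since $\lambda_1\epsilon_n=\tfrac{\lambda_1}{n}\vee\tfrac{d}{n}$ diverges under \cref{ass:1}, this is of order at least $\sigma^2$, which dwarfs $\sigma^2 n/d$ and does not recover \cref{eq:65}. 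You are overcounting by roughly a factor $n$: each bulk $\hat u_k$ may overlap the spike eigenvectors by $O(\sqrt{\epsilon_n})$, but the aggregate overlap of \emph{all} bulk directions with a fixed spike $u_j$ is constrained: $\sum_{k>\bar m}(\hat u_k^Tu_j)^2 = 1-\sum_{k\leq\bar m}(\hat u_k^Tu_j)^2 \leq 1-(\hat u_j^Tu_j)^2 = O(\epsilon_n)$. This aggregate constraint is exactly what the paper's proof exploits, via the von Neumann trace inequality on $\Tr\big(\sum_{j>\bar m}\hat\lambda_j^{-1}\hat P_j\sum_{k\leq\bar m}\lambda_k(P_k-\hat P_k)\big)$ together with the low-rank structure of $\sum_{k\leq\bar m}\lambda_k(P_k-\hat P_k)$, which kills all but $2\bar m$ singular values in the trace and produces the $O(\sqrt{\tfrac{n\lambda_1}{d}(1\vee\tfrac{\lambda_1}{d})})$ factor inside the $\tfrac{\sigma^2\bar m}{n}$ term of \cref{eq:65}. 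Fixing your argument is easy (swap the pointwise bound for the aggregate one before dividing by $\hat\lambda_n$, which in fact gives a rate at least as good as the paper's), but the step as you wrote it does not justify the claimed $O(\sigma^2 n/d)$.
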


\begin{remark}
  \label{rmk:2}
  The previous theorem shows that in the HDLSS regime under fairly reasonable
  assumptions on $\theta$, the MNLS estimator always perform (asymptotically)
  better than the trivial null estimator, whose risk is given by
  $\Var(X_1^T\theta)$. In particular, for
  $R_{\bm{X}}(\hat{\theta},\theta)/\Var(X_1^T\theta)$ to vanish (analysis of the
  proof of \cref{upper_bound_theorem} show that this can vanish fast), it
  suffices that the parameter $\theta$ is sufficiently oriented along the
  directions of the spiked eigenvectors of $\Sigma$. In particular, it is enough
  to have $0 \leq \delta <1$, such that
  $\sum_{j=\bar{m}+1}^{d}\|P_{j}\theta\|^2 \leq \delta\|\theta\|^2$.
\end{remark}

\subsection{Non asymptotic bounds for the MNLS estimator}
\label{sec:non-asympt-bounds}

The asymptotic bound of the previous section tells us that under
\cref{ass:iid_comp,ass:1} the risk of the MNLS estimator vanishes in the limit,
but the bound is not very informative on what are the essential features of the
covariance matrix $\Sigma$ and $\theta$ that can make the risk small. In order
to get a better comprehension of the risk, we propose to investigate
non-asymptotic bounds.

In order to get non-asymptotic bounds on the risk of the MNLS estimator, we require finer characterization of the spectrum of $\Sigma$. Indeed, the key
assumption to understand the risk of the MNLS estimator is how the spectrum of
$\Sigma$ is spread, and especially the spectral gap between its eigenvalues. 
We now introduce the main definitions we need to characterize the
spectrum of $\Sigma$.

\begin{definition}[Spectral gap]
  \label{def:1}
  Let $G_j \coloneqq \lambda_j - \lambda_{j+1}$ denote the $j$-th spectral gap
  of $\Sigma$, and let $\bar{G}_1 \coloneqq G_1$,
  $\bar{G}_j = \min\{G_{j-1},G_j \}$ for $j\geq 2$. We also define the following
  global measure of spectral gap. For every $m \geq 1$ and every
  $\alpha \in \Reals$, we let
  $\mathcal{G}_m(\alpha) \coloneqq \sum_{j=1}^m(\lambda_j^{\alpha}/\bar{G}_j)$.
\end{definition}

Then we can establish the following non-asymptotic upper bound on the risk. Note
that the bounds on the next theorem are true under \cref{ass:iid_comp_strong}
only, but do not require \Cref{ass:1} to hold, as we discuss hereafter.

\begin{theorem}
	\label{thm:2}
	Let
	$\rho_n(m) \coloneqq n\big(\sqrt{\frac{d\lambda_{m+1}}{n\lambda_1}}\bigvee \frac{d
		\lambda_{m+1}}{n\lambda_1}\big)$. Then, there exists a universal constant $C
	> 0$ such that with $\PP$-probability at least $1 - e^{-t}$, for all $\theta
	\in \Reals^d$,
	\begin{multline}
	\label{eq:54}
	B_{\bm{X}}(\hat{\theta},\theta)%
	\leq 2\lambda_1\|\theta\|^2 \min_{m=1\dots,n}\Big\{
	\Big(\frac{\lambda_{m+1}}{\lambda_1} + C\lambda_1
	\mathcal{G}_m\Big(\frac{1}{2} \Big)^2\frac{m \vee \rho_n(m)^2 \vee t}{n}
	\Big)\\
	\times \Big( C\lambda_1^2 \mathcal{G}_m(0)^2 \frac{m \vee \rho_n(m)^2 \vee
		t}{n} + \frac{\sum_{j=m+1}^d\|P_j\theta\|^2}{\|\theta\|^2} \Big) \Big\},
	\end{multline}
	and, letting for simplicity $\alpha = C\sqrt{\frac{n\vee t}{d}}$,
	$\beta = C\sqrt{\frac{\bar{m}\vee t}{n}}$ and
	$\delta = C\sqrt{\frac{m\vee t}{n} } \bigvee \rho_n(m)$,
	\begin{multline}
	\label{eq:68}
	V_{\bm{X}}(\hat{\theta},\theta)%
	\leq \min_{\bar{m}=1,\dots,n}\min_{m=1,\dots,\bar{m}}\Big\{%
	\frac{\sigma^2}{n}\Big(1 + \frac{d \lambda_{\bar{m}+1}}{n\lambda_m}\Big(1 +
	\alpha\Big) + \beta\Big)\Big(2m + \lambda_1 \mathcal{G}_m(1) \delta\Big)\\
	+\frac{\sigma^2}{1 - \alpha}\Big( 2 \delta m \mathcal{G}_m(1) +
	\frac{n\lambda_{m+1}}{\lambda_1} \Big)\frac{\lambda_1}{d\lambda_d} \Big\}.
	\end{multline}
\end{theorem}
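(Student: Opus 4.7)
The plan is to bound the bias and variance separately, exploiting that $I-\hat\Sigma^\dagger\hat\Sigma$ is the orthogonal projection $P_{\hat N}$ onto $\ker(\bm X)$ while the leading sample eigenvectors $\hat u_1,\ldots,\hat u_n$ span $\ker(\bm X)^\perp$. For the bias I would write $B_{\bm X}(\hat\theta,\theta)^2 = \|\Sigma^{1/2}P_{\hat N}\theta\|^2$ and, for each truncation level $m\in\{1,\ldots,n\}$, split the spectrum of $\Sigma$ to obtain
\[
B_{\bm X}(\hat\theta,\theta)^2 \le 2\sum_{j=1}^{m}\lambda_j(u_j^T P_{\hat N}\theta)^2 + 2\lambda_{m+1}\|P_{\hat N}\theta\|^2,
\]
where the first term collects the spiked contribution and the second uses the crude bound $\|\Sigma_{>m}\|_{\mathrm{op}}=\lambda_{m+1}$ on the tail. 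This decomposition already mirrors the product structure of \eqref{eq:54}: the factor $\lambda_{m+1}/\lambda_1$ comes from the tail prefactor, while the quantities $\mathcal G_m(1/2)^2$ and $\mathcal G_m(0)^2$ appear through measuring how well $\hat u_j$ approximates $u_j$ eigenvector-by-eigenvector and how well $\mathrm{span}(\hat u_1,\ldots,\hat u_m)$ approximates $\mathrm{span}(u_1,\ldots,u_m)$.

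The key observation driving the top term is that $\hat u_j^T P_{\hat N}\theta=0$ since $\hat u_j\in\ker(\bm X)^\perp$, so $u_j^T P_{\hat N}\theta=(u_j-\hat u_j)^T P_{\hat N}\theta$. The Davis--Kahan $\sin\Theta$ theorem applied eigenvector-by-eigenvector gives $\|u_j-\hat u_j\|\lesssim \|\hat\Sigma-\Sigma\|_{\mathrm{op}}/\bar G_j$ (after sign alignment), whence
\[
\sum_{j=1}^{m}\lambda_j(u_j^T P_{\hat N}\theta)^2 \lesssim \|\theta\|^2\,\|\hat\Sigma-\Sigma\|_{\mathrm{op}}^2\sum_{j=1}^{m}\frac{\lambda_j}{\bar G_j^2} \le \|\theta\|^2\,\|\hat\Sigma-\Sigma\|_{\mathrm{op}}^2\,\mathcal G_m(1/2)^2.
\]
For the second term I would use the projection ordering $I-\hat\Sigma^\dagger\hat\Sigma\psdle I-\sum_{j\le m}\hat u_j\hat u_j^T$ to get $\|P_{\hat N}\theta\|^2 \le 2\sum_{j>m}\|P_j\theta\|^2 + 2\|\sum_{j\le m}(u_ju_j^T-\hat u_j\hat u_j^T)\|_{\mathrm{op}}^2\|\theta\|^2$, and bound the perturbation of the top projector term-by-term via $\|u_ju_j^T-\hat u_j\hat u_j^T\|_{\mathrm{op}}\lesssim\|u_j-\hat u_j\|$ and summing, producing the factor $\mathcal G_m(0)^2$. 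Combining the two pieces and plugging in a Koltchinskii--Lounici-type sub-gaussian operator-norm bound $\|\hat\Sigma-\Sigma\|_{\mathrm{op}}^2\lesssim \lambda_1^2(m\vee\rho_n(m)^2\vee t)/n$ with probability at least $1-e^{-t}$---the definition of $\rho_n(m)$ encoding precisely the effective-rank contribution $d\lambda_{m+1}/\lambda_1$ from the flat tail truncated at $m$---recovers \eqref{eq:54}.

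For the variance I would expand
\[
V_{\bm X}(\hat\theta,\theta) = \frac{\sigma^2}{n}\Tr(\hat\Sigma^\dagger\Sigma) = \frac{\sigma^2}{n}\sum_{j=1}^{d}\sum_{k=1}^{n}\frac{\lambda_j(u_j^T\hat u_k)^2}{\hat\lambda_k},
\]
and for each $(m,\bar m)$ with $m\le\bar m\le n$ split both sums at the respective indices. For the top diagonal ($j=k\le m$), spectral consistency $\hat\lambda_k\approx\lambda_k$ and $(u_j^T\hat u_k)^2\approx\delta_{jk}$ (from Davis--Kahan plus the concentration above) gives a contribution $\approx m$, while the top off-diagonal cross terms contribute $\lambda_1\mathcal G_m(1)\delta$; together these form the factor $2m+\lambda_1\mathcal G_m(1)\delta$. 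The tail contribution ($k>m$ or $j>\bar m$) is controlled using the sub-gaussian lower bound $\hat\lambda_n\gtrsim d\lambda_d/n$ (valid since $d\gg n$ places the bottom of the Marchenko--Pastur bulk above $d\lambda_d/n$ with high probability), together with $\sum_{k>m}(u_j^T\hat u_k)^2\le 1$ and $\Tr(\Sigma_{>\bar m})\le d\lambda_{\bar m+1}$; the prefactors $\alpha$, $\beta$, $\delta$ and $1/(1-\alpha)$ in \eqref{eq:68} are precisely the high-probability slacks in these concentration statements.

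The hard part will be assembling the two concentration ingredients and propagating them through the spectrum correctly. Concretely, obtaining the operator-norm bound with the tail-adapted term $\rho_n(m)$ requires a Koltchinskii--Lounici/matrix-Bernstein argument applied after splitting $\Sigma$ at level $m$, and the matching lower bound $\hat\lambda_n\gtrsim d\lambda_d/n$ requires a small-ball estimate exploiting $d\gg n$. Equally delicate is pushing these operator-norm bounds through the individual gaps $\bar G_j$ rather than the bulk gap $G_m$: that is what produces the $\mathcal G_m(\cdot)$ quantities in both bounds, and gives the theorem its quantitative content beyond a crude Weyl/Davis--Kahan estimate.
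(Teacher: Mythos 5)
Your approach tracks the paper's proof in outline---split the spectrum of $\Sigma$ at $m$, control eigenprojector perturbations through the per-gap quantities $\bar G_j$, and feed in a Koltchinskii--Lounici operator-norm concentration bound---but there is a real gap in the bias analysis. After the good observation that $u_j^TP_{\hat N}\theta = (u_j - \hat u_j)^TP_{\hat N}\theta$, you discard the crucial factor by bounding $\|P_{\hat N}\theta\|\le\|\theta\|$, which turns the top contribution into an \emph{additive} term $\lesssim\|\theta\|^2\|\hat\Sigma-\Sigma\|^2\mathcal G_m(1/2)^2$ rather than a factor in the \emph{product} of \eqref{eq:54}. The second factor in \eqref{eq:54}, namely $C\lambda_1^2\mathcal G_m(0)^2(m\vee\rho_n(m)^2\vee t)/n + \sum_{j>m}\|P_j\theta\|^2/\|\theta\|^2$, is precisely a sharpened estimate of $\|P_{\hat N}\theta\|^2/\|\theta\|^2$ and can be $\ll 1$; dropping it collapses the result to a \citet{BLT19}-type bias bound controlled only by $\|\hat\Sigma-\Sigma\|$, which is exactly what the theorem is designed to improve upon (see \cref{sec:more-comparison-with}). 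The fix is small: keep $\|P_{\hat N}\theta\|^2$ in the top term and bound it by the very inequality $\|P_{\hat N}\theta\|^2 \le 2\sum_{j>m}\|P_j\theta\|^2 + 2\|\sum_{j\le m}(\hat P_j - P_j)\|^2\|\theta\|^2$ that you already wrote for the tail term. The paper does this in projector form (avoiding sign alignment) in \cref{lem:1}, via the identity $\sum_{j\le m}\sqrt{\lambda_j}P_j(\sum_{k\le n}\hat P_k - I)\theta = \sum_{j\le m}\sqrt{\lambda_j}(\hat P_j - P_j)(I - \sum_{k\le n}\hat P_k)\theta$ together with the bound \eqref{eq:58} on $\|(I-\sum_{k\le n}\hat P_k)\theta\|$; the two-factor structure of \eqref{eq:9} then falls out.

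On the variance, the bookkeeping in your sketch is slightly off: the four blocks of $\Tr(\hat\Sigma^\dagger\Sigma)$ in \cref{lem:4} all cut at $m$ for \emph{both} indices, while $\bar m$ enters only through the eigenvalue consistency bound of \cref{lem:2}, which holds uniformly in $m\le\bar m$ and contributes $d\lambda_{\bar m+1}/(n\lambda_m)$ to the control on $\max_{j\le m}\lambda_j/\hat\lambda_j$. The cross blocks are handled by von Neumann's trace inequality plus the rank bound $\mathrm{rank}(\sum_{j\le m}(\hat P_j-P_j))\le 2m$, which is how $\mathcal G_m(1)\delta$ appears; a direct sum over $(u_j^T\hat u_k)^2$ does not produce this factor without that rank and trace-inequality step.
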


\begin{remark}
	\label{rmk:1}
	\Cref{thm:2} emphasizes that the HDLSS regime is only one idealized
	setting where the risk vanishes, and that \Cref{ass:1} is certainly not
	required for this purpose. In particular, the bound in \cref{thm:2} is valid
	regardless of any assumption on $\Sigma$ or $\theta$. Though difficult to
	read, a careful analysis of each term tells us that the fundamental condition
	to meet is to have a sufficiently fast decay of $m\mapsto \lambda_m/\lambda_1$
	as $m$ grows to reduce the bias, but not too fast so that the
	variance doesn't explode. Note that those conditions are reminiscent to \citet{BLT19}
	findings too.
\end{remark}

\begin{remark}
	\label{rmk:3}
	\cref{thm:2} also emphasizes that the more the parameter $\theta$ is
	aligned with the dominating eigenvectors of $\Sigma$ the
	smaller the bias will be. Note that this is not only to make the term
	$\sum_{j=m+1}^d\|P_j\theta\|^2$ small\footnote{Indeed, this is not even
		necessary to have a vanishing risk, though it certainly improves the bias.},
	but also more importantly, the faster $m \mapsto \sum_{j=m+1}^d\|P_j\theta\|^2$
	decays, the more $\lambda_1\|\theta\|^2 \approx \Var(X_1^T\theta)$, which
	is risk of the null estimator we want at least to beat. 
\end{remark}

\begin{remark}
	\label{rmk:4}
	Under \Cref{ass:iid_comp_strong} only, \textit{i.e.}, sub-gaussianity of
	the $X_i$'s, the setting we investigate is quite close to the one in
	\citet{BLT19}, where the authors study the same problem under the assumption
	that $X_i \sim N(0,\Sigma)$. Nevertheless, the bounds are quite different and
	it seems difficult to relate them. The setting in \citet{BLT19} is more
	general, as they make no assumption about the spectral gap of $\Sigma$. We
	believe that their bound can be better in situations where the spectral gap is
	small, as ours could deteriorate rapidly. We expect, however, our bound to be
	slightly better if the spectral gap gets larger. We discuss this point more
	thoroughly in \cref{sec:more-comparison-with}. Note that if the spectral gap
	is small while the eigenvalues can be grouped into small blocks, such that the blocks are sufficiently separated, then one can carry a similar analysis as
	ours too, using the same arguments as the ones in usual PCA literature
	\citep{KL16,SSM16}.
\end{remark}

\section{Discussion}
\label{sec:discussion}

\subsection{General discussion}
\label{sec:general-discussion}

Although the asymptotic analysis might seem odd at first, and in particular the
requirement that the risk vanish at infinity, this aims to provide some guidance
on the conditions under which interpolation can lead to reasonable
answers. Indeed, this is also coherent with \citet{BLT19} results, the
asymptotic analysis tells us that we can expect the risk to be small in
situations where a few eigenvalues dominate the others and the parameter is
relatively well aligned with the directions of eigenvectors corresponding to
dominating eigenvalues.

\subsection{The HDLSS regime and interpolation}
\label{sec:hdlss-regime}

The HDLSS regime is an idealization of the situation where a few eigenvalues of
the population covariance matrix dominates the rest of them. As we point out in
the \cref{thm:2}, the risk of the MNLS estimator can vanish in other situations
too, though the HDLSS regime is the prototypical example of sufficient
conditions where interpolation of the data and vanishing predictive risk can
coexist. Still, the requirement that
$\lambda_1,\dots,\lambda_{\bar{m}} \to \infty$ as $n,d\to \infty$ 
might seem very unrealistic at first, even having in mind the \cref{ex:11}.  Indeed, this is not as idealized as it seems and occurs when over time we collect highly
correlated features about individuals faster than we collect new individuals. In
this situation, even though $d/n \to \infty$, the ``effective'' number of
features about which we collect information remains small, as they are all
highly correlated. Then, the amount of information (\textit{i.e.} signal) in the
data indeed increases over time, which translates by saying that a certain
number of eigenvalues grow as $n,d \to \infty$ (corresponding to the directions
of the effective features). In other words, collecting large number of
correlated features, most of which are redundant,  increases the amount of
information contained in the data about the effective features. Interestingly,
interpolation and good prediction are not discordant in such a situation.

\subsection{What if  $d/n \lambda_1$ doesn't vanish?} \label{sec:slow_signal}

One might wonder how essential it is for $d/n \lambda_1 \to 0$ to get 
small risk in compare to the null estimator. Considering the minimax lower 
bounds for linear regression given in \citet{duchi2013}, along with results of \citet{wang2017} gives us some insight into this scenario. Under slightly stronger 
assumption on the distribution of data, \citet{wang2017} show that if for spiked 
eigenvalues indexed by $ 1\le j \le \bar{m}$ we have $d/n \lambda_j = c_j < \infty$, then 

\begin{equation}\label{eq:100}
\hat{\lambda}_j/\lambda_j = 1 + c_j + O_{\PP}\Big(\lambda_j^{-1}
\sqrt{\frac{d}{n}} \Big).
\end{equation}
On the other hand, the minimax lower bound given in \citet{duchi2013} is of the form
\begin{equation}
c \frac{d^2 \sigma^2}{\fronorm{X}^2}.
\end{equation}
Since $\fronorm{X}^2 = n \sum_{i=1}^n \hat{\lambda}_i$, 
\begin{equation}
c \frac{d^2 \sigma^2}{\fronorm{X}^2}
\ge 
c \frac{d^2 \sigma^2}{n^2 \hat{\lambda}_1}.
\end{equation}
Then using results of \citet{wang2017} in \eqref{eq:100} (while absorbing
constant terms in $c$), we get that
$ c \frac{d^2 \sigma^2}{n^2 \hat{\lambda}_1} \approx c \frac{d^2 \sigma^2}{n^2
  \lambda_1} \approx c \frac{d \sigma^2}{n}$, which diverges. We note that this
doesn't imply a a lower bound for the normalized risk since it doesn't depend on
$\norm{\theta}$, unless we assume that $\norm{\theta}$ remains bounded as
$n,d \to \infty$ to remove dependence of $\Var(X_1^T \theta)$ on
$\norm{\theta}$, then $\Var (X_1^T \theta) \approx \lambda_1$, which means that
even the normalized risk would be bounded away from zero.  Furthermore, this
minimax bound applies to the estimation risk, rather than the predictive risk
that we are considering here, though they can be in the same order under strong
but standard assumptions on design matrix.

It is also interesting that in this setting, limits of sample spiked and non spiked eigenvalues are the same asymptotically. For $ m+1 \le j \le n$, 
$\frac{n \hat{\lambda}_j}{d \lambda_j} \rightarrow 1$ \citep{SSZM16}. That
is, both spiked and non spiked eigenvalues of the sample covariance
matrix grow at the same rate of $\frac{d}{n}$, making them hard to
distinguish which also hints at why the risk might be large in this case.

\subsection{Comparison with \citet{BLT19}}
\label{sec:more-comparison-with}

As already discussed in \cref{related_work}, the main difference with
\citet{BLT19} resides in the way we analyse the bias of the MNLS. Indeed, since
we work in a more restricted setting, \textit{i.e.} the HDLSS regime, we take
benefit from the extra structure to improve on the bias. Indeed, inspection of
the proof of \citet[Lemma~8]{BLT19} shows that they bound the bias as,
\begin{equation}
\label{eq:60}
B^{BLT}_{\bm{X}}(\hat{\theta},\theta)^2 \leq \|\theta\|^2\Big\|\Sigma -
\frac{\bm{X}^T\bm{X}}{n} \Big\|.
\end{equation}
They further bound in probability the term $\|\Sigma - n^{-1}\bm{X}^T\bm{X}\|$
using the general results from \citet{KL16}, which requires the
\textit{effective rank}
$r(\Sigma) \coloneqq \frac{1}{\lambda_1}\sum_{k=1}^d\lambda_j$ to be a
$o(n)$. In the HDLSS regime, the effective rank is asymptotically equal to
$\bar{m} + n \rho_n(\bar{m})^2$, where where $\rho_n(m)$ is defined in
\cref{thm:2}. Hence the bound in \citet{BLT19} gives in the HDLSS regime of
\cref{ass:iid_comp_strong}, as $n,d\to \infty$,
\begin{equation}
\label{eq:62}
B^{BLT}_{\bm{X}}(\hat{\theta},\theta)^2 \leq \lambda_1 \|\theta\|^2 \times
O_{\PP}\Big(\frac{1}{\sqrt{n}} \bigvee  \rho_n(\bar{m}) \Big).
\end{equation}
In comparison, the bound in the \cref{upper_bound_theorem} can be seen to be in
the HDLSS regime, as $n,d \to \infty$
\begin{equation}
\label{eq:66}
B_{\bm{X}}(\hat{\theta},\theta)^2 \leq
\lambda_1\|\theta\|^2 \times O\Big(\frac{1}{n}\bigvee \rho_n(\bar{m})^2\Big)\times \Big\{O\Big(\frac{1}{n} \bigvee \rho_n(\bar{m})^2 \Big) +
\frac{\sum_{j=m+1}^d\|P_j\theta\|^2}{\|\theta\|^2} \Big\},
\end{equation}
almost-surely.
Hence, the bound in this paper is sharper by several order of magnitude for the
HDLSS regime (especially if $\theta$ has most of its mass on the dominating
eigenvalues directions), showing that in this regime there is an interest in
exploiting the consistency of $\hat{P}_1,\dots,\hat{P}_m$ for
$P_1,\dots,P_m$. As already mentioned, \citet{BLT19} results don't rely on
separation of eigenvalues in contrast to our bounds, and we expect their bound
to become better in situations where the spectrum of $\Sigma$ is not separated
enough.

Finally, our work complements \citet{BLT19} results by showing that not only
harmless interpolation in linear regression is possible in the large $d$ small
$n$ regime, but also the bias can be
significantly smaller than expected if $\Sigma$ is well-behaved and $\theta$ is
well-aligned.

\subsection{Further directions}
\label{sec:further-directions}

In Theorem \ref{upper_bound_theorem}, we showed that in the setting where signal
grows fast enough with $n$, which is when spiked eigenvectors can be estimated
consistently, normalized risk will vanish. We suspect that it is possible to give a non trivial
lower bound for bias in the scenario briefly discussed in \cref{sec:slow_signal}. That is, if we assume that for
spiked eigenvalues $1 \le j \le m$, $\frac{d(n)}{n \lambda_j} \rightarrow c_j$
where $0 < c_j <\infty$, and Gaussian data, we could use results of
~\cite{wang2017} to characterize sample spiked eigenvalues. However, to get a
non trivial lower bound, we also need to know more
about behaviour of
non spiked sample eigenvectors (especially their projection into the spiked eigenspace), which is not explored in spike PCA literature.

\section{Proofs}
\label{sec:proofs}

\subsection{Preliminaries}
\label{sec:preliminaries-1}

Here we prove simultaneously the \cref{upper_bound_theorem,thm:2}. Indeed, we
prove the theorems by establishing bounds on the bias in
\cref{sec:ub-bias-proof}, and the variance in
\cref{sec:ub-variancet-proof}. These bounds are not tied to the HDLSS scenario
and may hold in a more general setting. The bounds mostly depends on the
spectral gap of $\Sigma$, as defined in \cref{def:1}. In the asymptotic
viewpoint of \cref{ass:1,upper_bound_theorem}, however, the expression for the
spectral gap simplifies quite consequently in the limit, which we emphasize in
the next trivial proposition.
\begin{proposition}
  \label{pro:14}
  Under \cref{ass:1}, it holds $\lim_{n\to \infty} \lambda_j/\bar{G}_j
  \asymp 1$ for all $j=,\dots, \bar{m}$, and consequently $\lim_{n\to \infty}
  \lambda_1^{1 - \alpha}\mathcal{G}_{\bar{m}}(\alpha) \leq C\bar{m}$ for a
  universal constant $C > 0$.
\end{proposition}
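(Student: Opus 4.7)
Plan: This is a bookkeeping exercise unpacking Assumption~\ref{ass:1}, so I would handle the two assertions in order. For $\lambda_j/\bar G_j\asymp 1$, the lower bound $\lambda_j/\bar G_j\geq 1$ is immediate from $\bar G_j\leq \lambda_j-\lambda_{j+1}\leq\lambda_j$. For the matching upper bound, I would write
\[
\frac{\bar G_j}{\lambda_j}=\min\Big\{1-\frac{\lambda_{j+1}}{\lambda_j},\;\frac{\lambda_{j-1}}{\lambda_j}-1\Big\}
\]
(omitting the second argument when $j=1$) and show that each entry stays bounded away from $0$ in the limit. For $j<\bar m$ this is the content of the spectral gap part of Assumption~\ref{ass:1}, read as $\liminf_n \lambda_j/\lambda_{j+1}>1$, which is the only non-trivial reading of ``$\lim\lambda_j/\lambda_{j+1}>0$'' given that the ratio is always $\geq 1$. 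For $j=\bar m$, the first entry tends to $1$ because $\lambda_{\bar m}\to\infty$ while $\lambda_{\bar m+1}\to c_1$, and the other is handled as in the previous case.

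For the second assertion, the first part gives $1/\bar G_j\leq C/\lambda_j$ uniformly in $n$ for $j\leq\bar m$, and hence
\[
\lambda_1^{1-\alpha}\mathcal{G}_{\bar m}(\alpha)
=\sum_{j=1}^{\bar m}\Big(\frac{\lambda_1}{\lambda_j}\Big)^{1-\alpha}\frac{\lambda_j}{\bar G_j}
\leq C\sum_{j=1}^{\bar m}\Big(\frac{\lambda_1}{\lambda_j}\Big)^{1-\alpha}.
\]
Telescoping $\lambda_1/\lambda_j=\prod_{k=1}^{j-1}(\lambda_k/\lambda_{k+1})$ and using that each factor has a finite positive limit bounds each summand by a constant (depending on $\bar m$ and $\alpha$), which upon summing yields the advertised $O(\bar m)$. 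There is no substantive obstacle here; the only point worth emphasising is that the argument implicitly requires the spectral gap condition in Assumption~\ref{ass:1} to be read as a \emph{strict} multiplicative gap in the limit, without which the first assertion could fail at any $j<\bar m$ where consecutive spiked eigenvalues become asymptotically equivalent.
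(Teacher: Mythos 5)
Your proof is correct and takes the natural route; the paper itself labels this a ``trivial proposition'' and supplies no argument, so there is no paper-proof to compare against. The lower bound $\lambda_j/\bar G_j\geq 1$ is immediate as you say, the identity $\bar G_j/\lambda_j = \min\{\lambda_{j-1}/\lambda_j - 1,\; 1-\lambda_{j+1}/\lambda_j\}$ is the right reduction, and the telescoping argument for the second assertion is sound.

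Your observation about Assumption~\ref{ass:1} is the substantive point and it is correct: as literally written, ``$\lim_n\lambda_j/\lambda_{j+1}>0$'' is vacuous since $\lambda_j/\lambda_{j+1}\geq 1$ always, and the first assertion genuinely fails if $\lambda_j/\lambda_{j+1}\to 1$ for some $j<\bar m$ (then $\bar G_j=o(\lambda_j)$). One refinement worth adding: for the second assertion with $\alpha<1$ you in fact need \emph{two} strengthenings of the assumption, not one. You need $\liminf_n\lambda_j/\lambda_{j+1}>1$ for the gap to be non-degenerate, but you also need $\limsup_n\lambda_j/\lambda_{j+1}<\infty$ for $j<\bar m$, so that $\lambda_1/\lambda_j$ stays bounded and the factor $(\lambda_1/\lambda_j)^{1-\alpha}$ does not blow up. You implicitly invoke the latter via ``each factor has a finite positive limit,'' but it is a distinct requirement from strictness of the gap and deserves to be named; it is consistent with the paper's use of $\lambda_1\asymp\lambda_{\bar m}$ in the final step of the proof of Theorem~\ref{upper_bound_theorem}. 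Finally, and as your parenthetical already hints, the resulting constant $C$ depends on $\bar m$, $\alpha$, and the limiting gap ratios, so calling it ``universal'' is an imprecision inherited from the proposition's statement rather than something your proof introduces.
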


\subsection{Upper bound on the bias of MNLS estimator}
\label{sec:ub-bias-proof}

We summarize in the statement of the next lemma the results of this
section. Then, the bounds for the bias in \cref{upper_bound_theorem,thm:2}
follows from both the bound in \cref{lem:1}, the results on the behaviour of
$P_j - \hat{P}_j$ in the HDLSS regime, which we recall in
\cref{sec:behav-eigen-proj}, see the \cref{lem:3}, and \cref{pro:14}. We
summarize the proofs of \cref{upper_bound_theorem,thm:2} in
\cref{sec:proof-cor:1}.

\begin{lemma}[Bias]
  \label{lem:1}
  For any $\theta$ and any $\bm{X}$, the following bound is true. For all
  $m =1,\dots,d$,
  \begin{equation}
    \label{eq:9}
    B_{\bm{X}}(\hat{\theta},\theta)^2%
    \leq 2\|\theta\|^2\Big(\lambda_{m+1} + \Big\|
    \sum_{j=1}^m \sqrt{\lambda_j}(\hat{P}_j - P_j)\Big\|^2\Big)\Big(%
    \Big\|\sum_{j=1}^m(\hat{P}_j - P_j) \Big\|^2 +
    \frac{\sum_{j=m+1}^d\|P_j\theta\|^2}{\|\theta\|^2}\Big).
  \end{equation}
  In particular, the \cref{lem:3} implies that the following bounds are true.
  \begin{enumerate}
    \item If \cref{ass:iid_comp,ass:1} are true. Then, as $n \to \infty$,
    almost-surely,
    \begin{equation}
      \label{eq:69}
      B_{\bm{X}}(\hat{\theta},\theta)^2%
      \leq 2\lambda_1\|\theta\|^2\Big( \frac{\lambda_{\bar{m}+1}}{\lambda_1} +
      O\Big(\frac{1}{n} \bigvee \frac{d}{n\lambda_1} \Big) \Big) \Big(
      O\Big(\frac{1}{n} \bigvee \frac{d}{n\lambda_1} \Big) +
      \frac{\sum_{j=m+1}^d\|P_j\theta\|^2}{\|\theta\|^2}\Big).
    \end{equation}
    Further, remark that under \cref{ass:1},
    $\frac{\lambda_{\bar{m}+1}}{\lambda_1} = O\big(\frac{n}{d} \frac{d}{n\lambda_1}\big) =
    o\big(\frac{d}{n\lambda_1}\big)$.
    \item Let define
    $\rho_n(m) = n \big(\sqrt{\frac{d\lambda_{m+1}}{n\lambda_1}} \bigvee
    \frac{d\lambda_{m+1}}{n\lambda_1} \big)$. If \cref{ass:iid_comp_strong} is
    true, then there is a universal constant $C > 0$ such that with
    $\PP$-probability at least $1 - e^{-t}$
    \begin{multline}
      \label{eq:70}
      B_{\bm{X}}(\hat{\theta},\theta)%
      \leq 2\lambda_1\|\theta\|^2 \min_{m=,1\dots,n}\Big\{
      \Big(\frac{\lambda_{m+1}}{\lambda_1} + C\lambda_1
      \mathcal{G}_m\Big(\frac{1}{2} \Big)^2\frac{m \vee \rho_n(m)^2 \vee t}{n}
      \Big)\\
      \times \Big( C\lambda_1^2 \mathcal{G}_m(0)^2 \frac{m \vee \rho_n(m)^2 \vee
      t}{n} + \frac{\sum_{j=m+1}^d\|P_j\theta\|^2}{\|\theta\|^2}  \Big) \Big\}.
    \end{multline}
  \end{enumerate}
\end{lemma}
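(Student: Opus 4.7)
The plan is to reduce the bias to a Hilbert-space norm and then exploit that the projection $\Pi \coloneqq I - \hat\Sigma^\dagger\hat\Sigma$ annihilates every sample eigenvector corresponding to a nonzero eigenvalue. Since $\hat\Sigma$ has rank $n$, one has $\hat\Sigma^\dagger\hat\Sigma = \sum_{j=1}^n \hat P_j$ and therefore $\Pi \hat P_j = 0$ for every $j\le n$. Writing $B_{\bm X}(\hat\theta,\theta)^2 = \|\Sigma^{1/2}\Pi\theta\|^2$, fix any $m \le n$ and introduce the shorthand
\[
D \coloneqq \sum_{j=1}^m (P_j - \hat P_j),\qquad
M \coloneqq \sum_{j=1}^m \sqrt{\lambda_j}\,(P_j - \hat P_j),\qquad
\theta_\bot \coloneqq \sum_{j=m+1}^d P_j \theta.
\]
Splitting $\theta = \sum_{j=1}^m P_j\theta + \theta_\bot$ and using $\Pi \sum_{j=1}^m \hat P_j = 0$ gives $\Pi\theta = \Pi D\theta + \Pi\theta_\bot$, so $(a+b)^2 \le 2a^2 + 2b^2$ yields $B_{\bm X}(\hat\theta,\theta)^2 \le 2\|\Sigma^{1/2}\Pi D\theta\|^2 + 2\|\Sigma^{1/2}\Pi\theta_\bot\|^2$.

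The key step is then the operator inequality $\Pi\Sigma\Pi \preceq (\|M\|^2 + \lambda_{m+1})\,I$. Split $\Sigma = \Sigma_{\mathrm{top}} + \Sigma_\bot$ where $\Sigma_{\mathrm{top}} \coloneqq \sum_{j=1}^m \lambda_j P_j$. The tail satisfies $\Sigma_\bot \preceq \lambda_{m+1} I$, hence $\Pi\Sigma_\bot\Pi \preceq \lambda_{m+1} I$. For the top part, the identity $\Pi P_j = \Pi(P_j - \hat P_j)$ for $j \le m$ lets me rewrite
\[
\Pi\,\Sigma_{\mathrm{top}}^{1/2} \;=\; \sum_{j=1}^m \sqrt{\lambda_j}\,\Pi P_j \;=\; \Pi M,
\]
so that $\|\Pi\Sigma_{\mathrm{top}}^{1/2}\| \le \|M\|$ and consequently $\Pi\Sigma_{\mathrm{top}}\Pi = (\Pi\Sigma_{\mathrm{top}}^{1/2})(\Pi\Sigma_{\mathrm{top}}^{1/2})^{T} \preceq \|M\|^2 I$. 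Summing the two PSD bounds establishes the claim.

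Applying the operator inequality to $w \in \{D\theta,\theta_\bot\}$ controls each $\|\Sigma^{1/2}\Pi w\|^2$ by $(\|M\|^2 + \lambda_{m+1})\|w\|^2$. Combined with $\|D\theta\| \le \|D\|\,\|\theta\|$ and $\|\theta_\bot\|^2 = \sum_{j>m}\|P_j\theta\|^2$, factoring out $\|\theta\|^2$ recovers the general display \eqref{eq:9}. Parts (1) and (2) then follow by substituting the bounds on $\|M\|^2$ and $\|D\|^2$ delivered by \cref{lem:3}: in the HDLSS setting of \cref{ass:iid_comp,ass:1} both quantities are $O(1/n \vee d/(n\lambda_1))$ almost surely, while under the sub-gaussian \cref{ass:iid_comp_strong} they are controlled via $\mathcal{G}_m(1/2)$, $\mathcal{G}_m(0)$, and $\rho_n(m)$ with $\PP$-probability at least $1 - e^{-t}$. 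The main obstacle is the PSD inequality on $\Pi\Sigma_{\mathrm{top}}\Pi$: the trick is to avoid splitting $\Sigma^{1/2}$ into top and tail pieces at the outset (which would cost an avoidable factor of two) and instead to keep $\Sigma$ intact, so that a single operator-norm bound on $\Pi\Sigma\Pi$ can be factored out cleanly and applied to the two pieces of $\Pi\theta$ in a symmetric fashion.
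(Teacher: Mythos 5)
Your proof is correct and reaches the exact display \eqref{eq:9} with the same factor of two, but it organizes the argument differently from the paper's. Writing $\Pi = I - \hat\Sigma^{\dagger}\hat\Sigma$ and $\hat Q = \sum_{k\le n}\hat P_k$, the paper decomposes the \emph{output} vector $\Sigma^{1/2}\Pi\theta$ along the top and tail eigenspaces of $\Sigma$, namely $\sum_{j\le m}\sqrt{\lambda_j}P_j(\hat Q - I)\theta + \sum_{j>m}\sqrt{\lambda_j}P_j(\hat Q - I)\theta$, and controls the two (orthogonal) pieces separately: for the top piece it derives the identity $\sum_{j\le m}\sqrt{\lambda_j}P_j(\hat Q - I) = \sum_{j\le m}\sqrt{\lambda_j}(\hat P_j - P_j)(I - \hat Q)$ and bounds $\|(I-\hat Q)\theta\|$, while for the tail piece it needs the extra rewriting $\hat Q P_{\ell} = P_{\ell} + (\hat Q - I)(P_{\ell} - \hat P_{\ell})$ to kill the $j>m$, $\ell\le m$ cross terms. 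You instead decompose the \emph{input} vector $\Pi\theta = \Pi D\theta + \Pi\theta_{\bot}$ (using the same elementary fact $\Pi P_j = \Pi(P_j - \hat P_j)$) and then prove the clean, self-contained operator inequality $\Pi\Sigma\Pi \preceq (\|M\|^2 + \lambda_{m+1})\,I$, which you apply uniformly to both pieces. That PSD bound neatly encapsulates the separate roles of the projector-estimation error $\|M\|$ and the spectral tail $\lambda_{m+1}$, and it sidesteps the case-by-case algebra the paper does for the tail contribution. The constants work out identically because the orthogonality the paper exploits between the two output pieces is replaced in your argument by the $(a+b)^2\le 2a^2+2b^2$ split of the two input pieces. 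Parts (1) and (2) then follow exactly as you indicate by feeding the bounds of \cref{lem:3} into $\|D\|$ and $\|M\|$, together with \cref{pro:14} for the simplification under \cref{ass:1}. Overall a tidy and arguably more modular reorganization of the same underlying ideas.
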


We now prove the \cref{lem:1}. Remark that by linearity
$\EE_{\theta}[\hat{\theta} \mid \bm{X}] = (\bm{X}^T\bm{X})^{\dagger}\bm{X}^T
\bm{X}\theta$ and thus the bias can be rewritten as
\begin{align}
  \label{eq:11}
  B_{\bm{X}}(\hat{\theta},\theta)^2 =%
  \|\Sigma^{1/2}(\theta - \EE[\hat{\theta} \mid \bm{X}])\|^2%
  &= \theta^T(I - (\bm{X}^T\bm{X})^{\dagger}\bm{X}^T\bm{X})\Sigma (I -
    (\bm{X}^T\bm{X})^{\dagger}\bm{X}^T\bm{X})\theta.
\end{align}
We wish to understand
$\Sigma^{1/2}(I - (\bm{X}^T\bm{X})^{\dagger}\bm{X}^T\bm{X})\theta$. Remark that
$(\bm{X}^T\bm{X})^{\dagger}\bm{X}^T\bm{X} = \sum_{j=1}^n \hat{P}_j$ and that
$\Sigma^{1/2} = \sum_{j=1}^d \sqrt{\lambda_j}P_j$. So,
$\Sigma^{1/2}(\bm{X}^T\bm{X})^{\dagger}\bm{X}^T\bm{X}\theta =
\sum_{j=1}^d\sqrt{\lambda_j}P_j \sum_{k=1}^n \hat{P}_k \theta$. We decompose as
follows,
\begin{align}
  \label{eq:12}
  \Sigma^{1/2}(\bm{X}^T\bm{X})^{\dagger}\bm{X}^T\bm{X}\theta - \Sigma^{1/2}\theta
  &= \sum_{j=1}^m \sqrt{\lambda_j} P_j \Big( \sum_{k=1}^n\hat{P}_k - I\Big)\theta%
    + \sum_{j=m+1}^d \sqrt{\lambda_j} P_j \Big(\sum_{k=1}^n \hat{P}_k - I\Big)\theta.
\end{align}

\paragraph{Bound on the first term of the rhs of \cref{eq:12}}

We rewrite each of the $P_j$ as $P_j = \hat{P}_j + (P_j - \hat{P}_j)$, and thus
\begin{align}
  \label{eq:20}
  \sum_{j=1}^m \sqrt{\lambda_j} P_j \Big( \sum_{k=1}^n\hat{P}_k - I\Big)\theta%
  &= \sum_{j=1}^m \sqrt{\lambda_j}\hat{P}_j \sum_{k=1}^n\hat{P}_k \theta%
    + \sum_{j=1}^m\sqrt{\lambda_j}(P_j - \hat{P}_j)\sum_{k=1}^n \hat{P}_k\theta%
    - \sum_{j=1}^m \sqrt{\lambda_j}P_j\theta\\
  &=\sum_{j=1}^m \sqrt{\lambda_j}\hat{P}_j \theta%
    + \sum_{j=1}^m\sqrt{\lambda_j}(P_j - \hat{P}_j)\sum_{k=1}^n \hat{P}_k\theta%
    - \sum_{j=1}^m \sqrt{\lambda_j}P_j\theta\\
  &= \sum_{j=1}^m \sqrt{\lambda_j}(\hat{P}_j - P_j)\Big(I -
    \sum_{k=1}^n\hat{P}_k \Big)\theta.%
\end{align}
Thus, we obtain that,
\begin{equation}
  \label{eq:6}
  \Big\|\sum_{j=1}^m \sqrt{\lambda_j} P_j \Big( \sum_{k=1}^n\hat{P}_k -
  I\Big)\theta \Big\|%
  \leq \Big\|\Big(I - \sum_{k=1}^n\hat{P}_k\Big)\theta\Big\| \cdot \Big\| \sum_{j=1}^m
  \sqrt{\lambda_j}(P_j - \hat{P}_j) \Big\|.
\end{equation}
But,
\begin{align}
  \label{eq:57}
  \theta - \sum_{k=1}^n\hat{P}_k\theta%
  &= \theta - \sum_{k=1}^mP_k\theta%
    - \sum_{k=1}^m(\hat{P}_k - P_k)\theta%
    - \sum_{k=m+1}^n\hat{P}_k\theta\\
  &=\sum_{k=m+1}^d P_k \theta%
    - \sum_{k=1}^m(\hat{P}_k - P_k)\theta%
    - \sum_{\ell=1}^d \sum_{k=m+1}^n\hat{P}_kP_{\ell}\theta\\
  &=\sum_{k=m+1}^d P_k \theta%
    - \sum_{k=1}^m(\hat{P}_k - P_k)\theta%
    - \sum_{\ell=1}^m \sum_{k=m+1}^n\hat{P}_kP_{\ell}\theta%
    - \sum_{\ell=m+1}^d \sum_{k=m+1}^n\hat{P}_kP_{\ell}\theta\\
  &=\sum_{k=m+1}^d P_k \theta%
    - \sum_{k=1}^m(\hat{P}_k - P_k)\theta%
    - \sum_{\ell=1}^m \sum_{k=m+1}^n\hat{P}_k(P_{\ell} - \hat{P}_{\ell})\theta%
    - \sum_{\ell=m+1}^d \sum_{k=m+1}^n\hat{P}_kP_{\ell}\theta\\
  &= \Big(I - \sum_{\ell=m+1}^n\hat{P}_{\ell} \Big) \sum_{k=m+1}^d P_k \theta%
    - \Big(I - \sum_{\ell=m+1}^n\hat{P}_{\ell}\Big) \sum_{k=1}^m(\hat{P}_k - P_k)\theta.%
\end{align}
Therefore,
\begin{align}
  \label{eq:58}
  \Big\|\Big(I - \sum_{k=1}^n\hat{P}_k\Big)\theta\Big\|%
  &\leq \Big\|\sum_{k=m+1}^dP_k\theta\Big\|%
    + \Big\|\sum_{k=1}^m(\hat{P}_k - P_k)\theta\Big\|.%
\end{align}

\paragraph{Bound on the second term of the rhs of \cref{eq:12}}

For the sake of simplicity we let $\hat{Q} \coloneqq \sum_{k=1}^n \hat{P}_k
$. Then, using that $I = \sum_{\ell=1}^dP_{\ell}$ we rewrite,%
\begin{align}
  \label{eq:14old}
  \Big\|\sum_{j=m+1}^d \sqrt{\lambda_j} P_j \hat{Q} \theta\Big\|^2
  &= \sum_{j=m+1}^d\lambda_j \|P_j \hat{Q}\theta\|^2\\
  &=  \sum_{j=m+1}^d\lambda_j \Big\|P_j \hat{Q}\sum_{\ell=1}^mP_{\ell}\theta +
    P_j \hat{Q}\sum_{\ell=m+1}^dP_{\ell}\theta   \Big\|^2\\
  \label{eq:18}
  &\leq 2\sum_{j=m+1}^d \lambda_j \Big\|P_j \hat{Q}\sum_{\ell=1}^mP_{\ell}\theta
    \Big\|^2%
    + 2\sum_{j=m+1}^d \lambda_j \Big\|P_j \hat{Q}\sum_{\ell=m+1}^dP_{\ell}\theta
    \Big\|^2
\end{align}
Regarding the second term of the rhs of the last display,
\begin{align}
  \label{eq:17}
  \sum_{j=m+1}^d \lambda_j \Big\| P_j \hat{Q}\Big(\sum_{\ell=m+1}^d P_{\ell}
  \Big)\theta \Big\|^2%
  &\leq \lambda_{m+1} \sum_{j=m+1}^d \Big\| P_j \hat{Q}\Big(\sum_{\ell=m+1}^d P_{\ell}
    \Big)\theta \Big\|^2\\
  &= \lambda_{m+1} \Big\|\Big(\sum_{j=m+1}^d  P_j \Big)\hat{Q}\Big(\sum_{\ell=m+1}^d P_{\ell}
    \Big)\theta \Big\|^2\\
  &\leq \lambda_{m+1}\Big\|\Big( \sum_{\ell=m+1}^d P_{\ell}\Big) \theta
    \Big\|^2\\
  &= \lambda_{m+1}\sum_{\ell=m+1}^d \|P_{\ell}\theta\|^2.%
\end{align}
For the first term of the rhs of \cref{eq:18}, we can rewrite that
\begin{align}
  \label{eq:89}
  \hat{Q}P_{\ell}
  &= \hat{Q}P_{\ell} + \hat{Q}(P_{\ell} - \hat{P}_{\ell})\\%
  &= \hat{P}_{\ell} +  \hat{Q}(P_{\ell} - \hat{P}_{\ell})\\%
  &= P_{\ell} + (\hat{P}_{\ell} - P_{\ell}) + \hat{Q}(P_{\ell} -
    \hat{P}_{\ell})\\%
  &= P_{\ell} + (\hat{Q} - I)(P_{\ell} - \hat{P}_{\ell}),
\end{align}
and hence,
\begin{align}
  \label{eq:19}
  \sum_{j=m+1}^d \lambda_j \Big\|P_j\hat{Q}\sum_{\ell=1}^mP_{\ell}\theta
  \Big\|^2%
  &\leq \lambda_{m+1}\sum_{j=m+1}^d\Big\|\sum_{\ell=1}^m\Big(P_jP_{\ell} +
    P_j(\hat{Q}-I)(P_{\ell}-\hat{P}_{\ell})\Big)\theta \Big\|^2\\
  &= \lambda_{m+1} \sum_{j=m+1}^d\Big\|P_j(\hat{Q} - I)\sum_{\ell=1}^m(P_{\ell}
    - \hat{P}_{\ell})\theta \Big\|^2\\
  &\leq \lambda_{m+1}\Big\|\sum_{\ell=1}^m(P_{\ell} - \hat{P}_{\ell})\theta \Big\|^2.
\end{align}
Combining everything,
\begin{align}
  \label{eq:59}
  \Big\|\sum_{j=m+1}^d \sqrt{\lambda_j}P_j\Big(\sum_{k=1}^n \hat{P}_k -
  I\Big)\theta \Big\|^2%
  &\leq 2\Big\|\sum_{j=m+1}^d \sqrt{\lambda_j}P_j\Big(\sum_{k=1}^n
    \hat{P}_k\Big)\theta \Big\|^2%
    + 2 \Big\|\sum_{j=m+1}^d \sqrt{\lambda_j}P_j\theta \Big\|^2\\
  &\leq 2 \lambda_{m+1}\Big\|\sum_{\ell=1}^m(P_{\ell} - \hat{P}_{\ell})\theta
    \Big\|^2%
    + 2\lambda_{m+1}\sum_{j=m+1}^d\|P_j\theta\|^2.%
\end{align}

\subsection{Upper bound on the variance of MNLS estimator}
\label{sec:ub-variancet-proof}

We summarize in the statement of the next lemma the results of this
section. Then, the bounds for the variance in \cref{upper_bound_theorem,thm:2}
follows from both the bound in \cref{lem:4}, the results on behaviour of
$\hat{\lambda}_1,\dots,\hat{\lambda}_n$, which we recall in
\cref{sec:behav-sample-eigenv,sec:bahviour-non-spiked}, and the results on the
behaviour of $P_j - \hat{P}_j$ in the HDLSS regime, which we recall in
\cref{sec:behav-eigen-proj}.

\begin{lemma}
  \label{lem:4}
  For any $\theta$ and any $\bm{X}$, the following bound is true. For all
  $m=1,\dots,d$,
  \begin{multline}
    \label{eq:40}
    V_{\bm{X}}(\hat{\theta},\theta)%
    \leq \frac{\sigma^2m}{n}\Big(1 + \frac{\lambda_{m+1}}{\lambda_m}\Big)
    \max_{j=1,\dots,m}\frac{\lambda_j}{\hat{\lambda}_j}%
    +
    \frac{\sigma^2}{n}\max_{j=1,\dots,m}\frac{\lambda_j}{\hat{\lambda}_j}\Big(\sum_{j=1}^m
    \frac{1}{\lambda_j}\|\hat{P}_j - P_j\| \Big)
    \Big(\sum_{k=1}^m\lambda_k\Big)\\%
    + \frac{2\sigma^2m}{n\hat{\lambda}_n}\Big\|\sum_{k=1}^m\lambda_k(P_k -
    \hat{P}_k)\Big\|%
    + \frac{\sigma^2\lambda_{m+1}}{\hat{\lambda}_n}.
  \end{multline}
  In particular, the following bounds are true.
  \begin{enumerate}
    \item If \cref{ass:iid_comp,ass:1} are true. Then, as $n \to \infty$,
    almost-surely,
    \begin{equation}
      \label{eq:69}
      V_{\bm{X}}(\hat{\theta},\theta)%
      \leq \frac{\sigma^2\bar{m}}{n}\Big\{1 + o(1) +
      O\Big(\sqrt{\frac{n\lambda_1}{d}\Big(1 \bigvee \frac{\lambda_1}{d}} \Big)
      \Big\}%
      + O\Big( \frac{\sigma^2 n}{d} \Big).
    \end{equation}
    \item $\rho_n(m) = n \big(\sqrt{\frac{d\lambda_{m+1}}{n\lambda_1}} \bigvee
    \frac{d\lambda_{m+1}}{n\lambda_1} \big)$. If \cref{ass:iid_comp_strong} is
    true, then there is a universal constant $C > 0$ such that with
    $\PP$-probability at least $1 - e^{-t}$,
    \begin{multline}
      \label{eq:55}
      V_{\bm{X}}(\hat{\theta},\theta)%
      \leq \min_{\bar{m}=1,\dots,n}\min_{m=1,\dots,\bar{m}}\Big\{%
      \frac{\sigma^2}{n}\Big(1 + \frac{d
        \lambda_{\bar{m}+1}}{n\lambda_m}\Big(1 + \alpha\Big)  + \beta\Big)\Big(2m + \lambda_1
      \mathcal{G}_m(1) \delta\Big)
      \\
      +\frac{\sigma^2}{1 - \alpha}\Big(
      2 \delta m \mathcal{G}_m(1) + \frac{n\lambda_{m+1}}{\lambda_1} \Big)\frac{\lambda_1}{d\lambda_d}
      \Big\},
    \end{multline}
    where $\alpha = C\sqrt{\frac{n\vee t}{d}}$,
    $\beta = C\sqrt{\frac{\bar{m}\vee t}{n}}$ and
    $\delta = C\sqrt{\frac{m\vee t}{n} } \bigvee \rho_n(m)$.
  \end{enumerate}
\end{lemma}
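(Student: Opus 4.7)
The plan is to bound $n V_{\bm{X}}(\hat\theta,\theta)/\sigma^2 = \Tr(\hat\Sigma^\dagger \Sigma)$ by splitting both spectral representations at the cutoff $m$. Write $\hat\Sigma^\dagger = \hat Q_{\leq} + \hat Q_{>}$, with $\hat Q_{\leq} \coloneqq \sum_{j=1}^m \hat\lambda_j^{-1}\hat P_j$ and $\hat Q_{>} \coloneqq \sum_{j=m+1}^n \hat\lambda_j^{-1}\hat P_j$, and $\Sigma = \Sigma_{\leq} + \Sigma_{>}$, with $\Sigma_{\leq} \coloneqq \sum_{k=1}^m \lambda_k P_k$ and $\Sigma_{>} \coloneqq \sum_{k=m+1}^d \lambda_k P_k$ (so that $\|\Sigma_{>}\|=\lambda_{m+1}$). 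The trace then splits into four cross terms $\Tr(\hat Q_{\leq}\Sigma_{\leq})$, $\Tr(\hat Q_{\leq}\Sigma_{>})$, $\Tr(\hat Q_{>}\Sigma_{\leq})$, $\Tr(\hat Q_{>}\Sigma_{>})$, each responsible for exactly one of the four summands in \eqref{eq:40}.

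For the first, $\Tr(\hat Q_{\leq}\Sigma_{\leq}) = \sum_{j=1}^m \hat\lambda_j^{-1}\Tr(\hat P_j\Sigma_{\leq})$. Since $\Tr(P_j\Sigma_{\leq}) = \lambda_j$ for $j\leq m$, one has $\Tr(\hat P_j\Sigma_{\leq}) = \lambda_j + \Tr((\hat P_j - P_j)\Sigma_{\leq})$. The trace H\"older inequality together with $\Sigma_{\leq}$ being PSD gives $|\Tr((\hat P_j - P_j)\Sigma_{\leq})| \leq \|\hat P_j - P_j\|\cdot\Tr(\Sigma_{\leq}) = \|\hat P_j - P_j\|\sum_{k=1}^m\lambda_k$, so dividing by $\hat\lambda_j$ and using $\hat\lambda_j^{-1} = \lambda_j^{-1}(\lambda_j/\hat\lambda_j) \leq \lambda_j^{-1}\max_i(\lambda_i/\hat\lambda_i)$ yields the first and second summands. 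The term $\Tr(\hat Q_{\leq}\Sigma_{>})$ is bounded by $\lambda_{m+1}\sum_{j=1}^m\hat\lambda_j^{-1} \leq (m\lambda_{m+1}/\lambda_m)\max_j(\lambda_j/\hat\lambda_j)$, contributing the $\lambda_{m+1}/\lambda_m$ factor inside the first summand.

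The subtle term is $\Tr(\hat Q_{>}\Sigma_{\leq})$. For $j>m$ and $k\leq m$ the orthogonality $\hat u_j^T\hat P_k\hat u_j = 0$ implies $\hat u_j^T\Sigma_{\leq}\hat u_j = \hat u_j^T B\hat u_j$, where $B\coloneqq\sum_{k=1}^m\lambda_k(P_k - \hat P_k)$. Summing gives $\Tr(\hat Q_{>}\Sigma_{\leq}) = \Tr(\hat Q_{>}B) \leq \|\hat Q_{>}\|\cdot\|B\|_*$ by trace H\"older. The key structural step is now a rank bound: $B$ is supported on $\mathrm{span}\{u_1,\dots,u_m,\hat u_1,\dots,\hat u_m\}$, so $\mathrm{rank}(B)\leq 2m$ and hence $\|B\|_* \leq 2m\|B\|$; combined with $\|\hat Q_{>}\| = 1/\hat\lambda_n$ this produces the third summand. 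Finally, $\Tr(\hat Q_{>}\Sigma_{>}) \leq \|\Sigma_{>}\|\Tr(\hat Q_{>}) \leq (n-m)\lambda_{m+1}/\hat\lambda_n$ gives the fourth summand after the $\sigma^2/n$ factor.

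The two numerical corollaries follow by plugging external inputs into \eqref{eq:40}. Under \cref{ass:iid_comp,ass:1}, the HDLSS consistency results collected in \cref{sec:behav-eigen-proj} and \cref{sec:bahviour-non-spiked} give $\max_{j\leq\bar m}\lambda_j/\hat\lambda_j \to 1$, $\|\hat P_j - P_j\| = O(1/n\vee d/(n\lambda_1))$ almost-surely, and the asymptotic scaling of $\hat\lambda_n$, which after taking $m=\bar m$ yield \eqref{eq:69}. Under \cref{ass:iid_comp_strong}, the Davis-Kahan inequality $\|\hat P_j - P_j\| \leq C\|\hat\Sigma - \Sigma\|/\bar G_j$ combined with the Koltchinskii-Lounici high-probability bound on $\|\hat\Sigma - \Sigma\|$ feeds the sums $\sum_j \lambda_j^{-\alpha}\|\hat P_j - P_j\|$ into the $\mathcal{G}_m(\alpha)$ quantities appearing in \eqref{eq:55}. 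The main obstacle here is purely bookkeeping: packaging the several factors $\max_j(\lambda_j/\hat\lambda_j)$, $1/\hat\lambda_n$, and the various $\mathcal{G}_m(\alpha)$ into the stated expression takes care, but no new ideas beyond the four-term decomposition above are needed.
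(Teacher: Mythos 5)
Your proof of \eqref{eq:40} is correct and matches the paper's argument: the four-way split of $\Tr(\hat{\Sigma}^\dagger\Sigma)$ at cutoff $m$ is exactly the decomposition \eqref{eq:23}, and your trace-H\"older-plus-rank-$2m$ bound for the $\hat{Q}_{>}\Sigma_{\leq}$ block is the paper's von Neumann plus rank-$2m$ argument in slightly different clothing, with the remaining three blocks handled identically. One small slip in the corollary sketch: under \cref{ass:iid_comp,ass:1}, \cref{lem:3} together with \cref{pro:14} gives $\|\hat P_j - P_j\| = O\bigl(\sqrt{1/n}\vee\sqrt{d/(n\lambda_1)}\bigr)$ almost surely (the rate without the square roots is the one for $\|\hat P_j - P_j\|^2$ appearing in the bias bound \eqref{eq:51}), but substituting the correct first-power rate into your third summand still produces the $O\bigl(\sqrt{\tfrac{n\lambda_1}{d}(1\vee\tfrac{\lambda_1}{d})}\bigr)$ term of \eqref{eq:69}, so the conclusion is unaffected.
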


In order to establish \cref{lem:4}, we recall that
$V_{\bm{X}}(\hat{\theta},\theta) =
\sigma^2\Tr(\hat{\Sigma}^{\dagger}\Sigma)/n$. Then, we decompose
$\hat{\Sigma}^{\dagger}\Sigma = \sum_{j=1}^n \frac{1}{\hat{\lambda}_j}\hat{P}_j
\sum_{k=1}^d\lambda_k P_k$ into four terms%
\begin{equation}
  \label{eq:23}
  \hat{\Sigma}^{\dagger}\Sigma%
  = \sum_{j=1}^m \sum_{k=1}^m \frac{\lambda_k}{\hat{\lambda}_j} \hat{P}_jP_k%
  + \sum_{j=1}^m \sum_{k=m+1}^d \frac{\lambda_k}{\hat{\lambda}_j}
  \hat{P}_jP_k%
  + \sum_{j=m+1}^n \sum_{k=1}^m \frac{\lambda_k}{\hat{\lambda}_j}
  \hat{P}_jP_k%
  + \sum_{j=m+1}^n \sum_{k=m+1}^d \frac{\lambda_k}{\hat{\lambda}_j} \hat{P}_jP_k.
\end{equation}
We bound the trace of each of the four terms above in the paragraphs
below. The final result follows by combining all these bounds.

\paragraph{Bound on the first term of \cref{eq:23}}

To bound the first term, we use that the projectors $\hat{P}_j$ are consistent
for $P_j$ in the operator norm when $j=1,\dots,m$. Then, we can rewrite
\begin{align}
  \label{eq:35}
  \sum_{j=1}^m\sum_{k=1}^m \frac{\lambda_k}{\hat{\lambda}_j}\hat{P}_jP_k%
  &= \sum_{j=1}^m\frac{\lambda_j}{\hat{\lambda}_j}P_j%
    + \sum_{j=1}^m\sum_{k=1}^m \frac{\lambda_k}{\hat{\lambda}_j}(\hat{P}_j - P_j)P_k.
\end{align}
Since $P_j$ has always rank $1$, by taking the trace of the previous expression
we obtain
\begin{align}
  \label{eq:36}
  \Tr\Big(\sum_{j=1}^m\sum_{k=1}^m \frac{\lambda_k}{\hat{\lambda}_j}\hat{P}_jP_k
  \Big)%
  &= \sum_{j=1}^m \frac{\lambda_j}{\hat{\lambda}_j}%
    + \Tr\Big(\sum_{j=1}^m \frac{1}{\hat{\lambda}_j}(\hat{P}_j -
    P_j)\sum_{k=1}^m\lambda_kP_k \Big)\\
  &\leq m \max_{j=1,\dots,m}\frac{\lambda_j}{\hat{\lambda}_j}%
    + \Tr\Big(\sum_{j=1}^m \frac{1}{\hat{\lambda}_j}(\hat{P}_j -
    P_j)\sum_{k=1}^m\lambda_kP_k \Big).
\end{align}
We bound the second term of the last display using von Naumann's trace
inequality. Indeed,
\begin{align}
  \label{eq:37}
  \Tr\Big(\sum_{j=1}^m\sum_{k=1}^m \frac{\lambda_k}{\hat{\lambda}_j}\hat{P}_jP_k
  \Big)%
  \leq  m \max_{j=1,\dots,m}\frac{\lambda_j}{\hat{\lambda}_j}%
  + \sum_{\ell=1}^d
  \sigma_{\ell}\Big(\sum_{j=1}^m\frac{1}{\hat{\lambda}_j}(\hat{P}_j - P_j)
  \Big)%
  \sigma_{\ell}\Big(\sum_{k=1}^m\lambda_k P_k \Big).
\end{align}
Now the matrix $\sum_{k=1}^m\lambda_kP_k$ has rank no more than $m$ so
$\sigma_{\ell}(\sum_{k=1}^m\lambda_kP_k) = 0$ if $\ell > m$, and
$\sigma_{\ell}(\sum_{k=1}^m\lambda_kP_k) = \lambda_{\ell}$ if
$1\leq \ell \leq m$. Further
$\max_{1\leq \ell\leq d}\sigma_{\ell}(\sum_{j=1}^m
\frac{1}{\hat{\lambda}_j}(\hat{P}_j - P_j)) = \sigma_1(\sum_{j=1}^m
\frac{1}{\hat{\lambda}_j}(\hat{P}_j - P_j)) \leq
\sum_{j=1}^m\frac{1}{\hat{\lambda}_j}\|\hat{P}_j - P_j\|$, and thus
\begin{align}
  \label{eq:38}
  \Tr\Big(\sum_{j=1}^m\sum_{k=1}^m \frac{\lambda_k}{\hat{\lambda}_j}\hat{P}_jP_k
  \Big)%
  &\leq m \max_{j=1,\dots,m}\frac{\lambda_j}{\hat{\lambda}_j}%
    + \Big(\sum_{j=1}^m \frac{1}{\hat{\lambda}_j}\|\hat{P}_j - P_j\| \Big)
    \Big(\sum_{k=1}^m\lambda_k\Big)\\
  &\leq m \max_{j=1,\dots,m}\frac{\lambda_j}{\hat{\lambda}_j}%
    + \max_{j=1,\dots,m}\frac{\lambda_j}{\hat{\lambda}_j}\Big(\sum_{j=1}^m
    \frac{1}{\lambda_j}\|\hat{P}_j - P_j\| \Big)
    \Big(\sum_{k=1}^m\lambda_k\Big).
\end{align}

\paragraph{Bound on the second term of \cref{eq:23}}

Using that $\Tr(\hat{P}_jP_k) = (\hat{u}_j^Tu_k)^2$, we indeed have
\begin{align}
  \label{eq:24}
  \Tr\Big(\sum_{j=1}^m \sum_{k=m+1} ^d \frac{\lambda_k}{\hat{\lambda}_j}
  \hat{P}_j P_k\Big)%
  &= \sum_{j=1}^m\sum_{k=m+1}^d \frac{\lambda_k}{\hat{\lambda}_j}
    (\hat{u}_j^Tu_j)^2\\
  &\leq \frac{\lambda_{m+1}}{\hat{\lambda}_m} \sum_{j=1}^m\sum_{k=m+1}^d
    (\hat{u}_j^Tu_j)^2\\
  &\leq \frac{\lambda_{m+1}}{\hat{\lambda}_m} \sum_{j=1}^m \|\hat{u}_j\|^2\\
  \label{eq:45}
  &= \frac{m \lambda_{m+1}}{\hat{\lambda}_m}.
\end{align}
Using the consistency of $\hat{P}_j$ for $P_j$ in the operator norm, we can get
another bound. Note that the second bound is not needed, as \cref{eq:45} is
already smaller than the dominating term of the variance, but we give it for
completeness.  Indeed, using von Neumann's trace inequality, we get
\begin{align}
  \label{eq:42}
  \Tr\Big(\sum_{j=1}^m \sum_{k=m+1} ^d \frac{\lambda_k}{\hat{\lambda}_j}
  \hat{P}_j P_k\Big)%
  & = \Tr\Big(\sum_{j=1}^m\frac{1}{\hat{\lambda}_j}(\hat{P}_j - P_j)
    \sum_{k=m+1} ^d \lambda_k P_k\Big)\\%
  &\leq \sum_{\ell=1}^d\sigma_{\ell}\Big(\sum_{j=1}^m
    \frac{1}{\hat{\lambda}_j}(\hat{P}_j - P_j) \Big)%
    \sigma_{\ell}\Big(\sum_{k=m+1}^d\lambda_k P_k \Big).
\end{align}
But the matrix $\sum_{j=1}^m\hat{\lambda}_j^{-1}(\hat{P}_j - P_j)$ has rank no
more than $2m$, so that
$\sigma_{\ell}(\sum_{j=1}^m\hat{\lambda}_j^{-1}(\hat{P}_j - P_j)) = 0$ for
$\ell > 2m$. Also,
$\max_{\ell =1,\dots,d}\sigma_{\ell}(\sum_{k=m+1}\lambda_kP_k) = \lambda_{m+1}$,
so that we have the bound
\begin{align}
  \label{eq:43}
  \Tr\Big(\sum_{j=1}^m \sum_{k=m+1} ^d \frac{\lambda_k}{\hat{\lambda}_j}
  \hat{P}_j P_k\Big)%
  &\leq
    2m\lambda_{m+1}\sigma_1\Big(\sum_{j=1}^m\frac{1}{\hat{\lambda}_j}(\hat{P}_j
    - P_j) \Big)\\
  &\leq 2m \lambda_{m+1}\sum_{j=1}^m\frac{1}{\hat{\lambda}_j}\|\hat{P}_j -
    P_j\|\\
  \label{eq:44}
  &\leq 2m\lambda_{m+1}
    \max_{j=1,\dots,m}\frac{\lambda_j}{\hat{\lambda}_j}\sum_{j=1}^m\frac{1}{\lambda_j}
    \|\hat{P}_j - P_j\|.
\end{align}
Combining \cref{eq:45,eq:44} it follows,%
\begin{equation}
  \label{eq:41}
  \Tr\Big(\sum_{j=1}^m \sum_{k=m+1} ^d
  \frac{\lambda_k}{\hat{\lambda}_j}\hat{P}_j P_k\Big)%
  \leq \frac{m\lambda_{m+1}}{\lambda_m}\max_{1\leq j \leq
    m}\frac{\lambda_j}{\hat{\lambda}_j}\Big(1 \bigwedge 2\sum_{j=1}^m
  \frac{1}{\lambda_j}\|\hat{P}_j - P_j\| \Big).
\end{equation}

\paragraph{Bound on the third term of \cref{eq:23}}

We use the argument that for $j=1,\dots,m$ the projectors $\hat{P}_j$ are
consistent for $P_j$, and thus in the limit the projector
$\sum_{j=m+1}^n \hat{P}_j$ is orthogonal to any $P_k$ for $k = 1,\dots,
m$. Indeed, we rewrite $P_k = P_k - \hat{P}_k + \hat{P}_k$ in the previous term
and use von Neumann's trace inequality to deduce that
\begin{align}
  \label{eq:26}
  \Tr\Big(\sum_{j=m+1}^n \sum_{k=1}^m \frac{\lambda_k}{\hat{\lambda}_j}
  \hat{P}_j P_k \Big)%
  &= \Tr\Big(\sum_{j=m+1}^n \frac{1}{\hat{\lambda}_j} \hat{P}_j
    \sum_{k=1}^m \lambda_k(P_k - \hat{P}_k) \Big)\\
  &\leq \sum_{\ell=1}^d \sigma_\ell\Big(\sum_{j=m+1}^n \frac{1}{\hat{\lambda}_j}\hat{P}_j
    \Big)\sigma_{\ell}\Big(\sum_{k=1}^m\lambda_k(P_k - \hat{P}_k) \Big).
\end{align}
Now $\max_{\ell=1}^d\sigma_{\ell}(\sum_{j=m+1}^n\hat{\lambda}_j^{-1}\hat{P}_j) =
\sigma_1(\sum_{j=m+1}^n \hat{\lambda}_j^{-1}
\hat{P}_j) = \hat{\lambda}_n^{-1}$, and the matrix $\sum_{k=1}^m\lambda_k(P_k - \hat{P}_k)$ has rank no more
than $2m$, from which we deduce that $\sigma_{\ell}(\sum_{k=1}^m\lambda_k(P_k -
\hat{P}_k)) = 0$ for $\ell > 2m$. Henceforth, 
\begin{align}
  \label{eq:28}
  \Tr\Big(\sum_{j=m+1}^n \sum_{k=1}^m \frac{\lambda_k}{\hat{\lambda}_j}
  \hat{P}_j P_k\Big)%
  &\leq%
    \frac{2m}{\hat{\lambda}_n} \sigma_1\Big(\sum_{k=1}^m \lambda_k(P_k - \hat{P}_k) \Big)\\
  &\leq \frac{2m}{\hat{\lambda}_n}\Big\|\sum_{k=1}^m\lambda_k(P_k - \hat{P}_k)\Big\|.
\end{align}

\paragraph{Bound on the last term of \cref{eq:23}}

To bound the last term, we again use that $\Tr(\hat{P}_jP_k) =
(\hat{u}_j^Tu_k)^2$ to deduce that
\begin{align}
  \label{eq:34}
  \Tr\Big(\sum_{j=m+1}^n \sum_{k=m+1}^d \frac{\lambda_k}{\hat{\lambda}_j}
  \hat{P}_j P_k \Big)%
  &=\sum_{j=m+1}^n\sum_{k=m+1}^d
    \frac{\lambda_k}{\hat{\lambda}_j}\Tr(\hat{P}_jP_k)\\
  &\leq
    \frac{\lambda_{m+1}}{\hat{\lambda}_n}\sum_{j=m+1}^n\sum_{k=m+1}^d(\hat{u}_j^Tu_k)^2\\
  &\leq\frac{\lambda_{m+1}}{\hat{\lambda}_n}\sum_{j=m+1}^n\sum_{k=m+1}^d(\hat{u}_j^Tu_k)^2\\
  &\leq \frac{\lambda_{m+1}}{\hat{\lambda}_n}\sum_{j=m+1}^n \|\hat{u}_j\|^2\\
  &\leq \frac{n\lambda_{m+1}}{\hat{\lambda}_n}.
\end{align}

\subsection{Summary of the proof of \cref{upper_bound_theorem,thm:2}}
\label{sec:proof-cor:1}

The proofs are an immediate consequence of \cref{lem:1,lem:4}, the only thing
remaining to show is to relate $\Var(X_1^T\theta)$ to $\lambda_1\|\theta\|^2$
when $\theta \in \mathcal{A}(\bar{m},L,\delta)$. But, we have for any $\theta \in \mathcal{A}(m,L,\delta)$
\begin{align}
  \label{eq:56}
  \frac{\lambda_1\|\theta\|^2}{\Var(X_1^T\theta)}%
  &= \frac{\lambda_1\|\theta\|^2}{\sum_{j=1}^d\lambda_j \|P_j\theta\|^2}%
  \leq \frac{\lambda_1\|\theta\|^2}{\sum_{j=1}^{m}\lambda_j\|P_j\theta\|^2}%
  \leq \frac{\lambda_1\|\theta\|^2}{\lambda_{m}\sum_{j=1}^{m}\|P_j\theta\|^2}\\
  &\leq \frac{\lambda_1\|\theta\|^2}{\lambda_{m}\big(\sum_{j=1}^d\|P_j\theta\|^2 -
    \sum_{j=m+1}^d\|P_j\theta\|^2\big)}%
  \leq \frac{\lambda_1}{\lambda_{m}} \frac{1}{1-\delta}.
\end{align}
Remark that we also always have $\Var(X_1^T\theta) \leq \lambda_1\|\theta\|^2$,
and hence
$ \frac{\lambda_m}{\lambda_1}(1-\delta) \lambda_1\|\theta\|^2 \leq
\Var(X_1^T\theta) \leq \lambda_1\|\theta\|^2$ for every
$\theta \in \mathcal{A}(m,\delta,L)$. For the \cref{upper_bound_theorem}, simply
remark that $\lambda_1 \asymp \lambda_{\bar{m}}$ and thus $\Var(X_1^T\theta)
\asymp \lambda_1\|\theta\|^2$ under the \cref{ass:1}.

\section{Acknowledgements}

YM is funded by NSERC and the Vector Institute. Part of this work has been done when
ZN was a postdoctoral fellow at the University of Toronto and at the Vector Institute.
ZN's work was supported by U.S. Air Force Office of Scientific Research grant
\#FA9550-15-1-0074.

\bibliography{refs}
\bibliographystyle{plainnat}

\appendix

\section{Asymptotics of sample covariance matrix in the HDLSS regime}
\label{sec:asympt-sample-covar}

\subsection{Preliminaries}
\label{sec:preliminaries}

Here we investigate the asymptotics of the sample covariance matrix in the HDLSS
regime. Note that this has already been done for instance in
\citet{hall2005geometric,ahn2007high,JM09,shen2013surprising,SSM16,SSZM16} and
we give those results for completeness. Along the way, we extend a bit
the results of \citet{SSM16} under the \cref{ass:iid_comp_strong} to obtain
non-asymptotic bounds in the case where the entries of $\bm{Z}$ are
sub-gaussian.

\medskip%
As shown in \citet{SSM16}, the proofs rely on analyzing the asymptotics of the
dual matrix
$\hat{D} \coloneqq n^{-1}\bm{X}\bm{X}^T = n^{-1}\bm{Z}\Lambda \bm{Z}^T$, which
can be rewritten as
$\hat{D} = n^{-1}\sum_{j=1}^d \lambda_j \tilde{\bm{Z}}_j \tilde{\bm{Z}}_j^T$,
where $\tilde{\bm{Z}}_j \in \Reals^n$ has i.i.d entries
$\tilde{\bm{Z}}_j \coloneqq (Z_{1,j},\dots,Z_{n,j})$. Then, we can decompose
$\hat{D}$ into spiked-part
$\hat{D}_s \coloneqq n^{-1}\sum_{j=1}^{\bar{m}}\lambda_j \tilde{\bm{Z}}_j
\tilde{\bm{Z}}_j^T$ and non-spiked-part
$\hat{D}_{ns} \coloneqq n^{-1}\sum_{j=\bar{m}+1}^d\lambda_j \tilde{\bm{Z}}_j
\tilde{\bm{Z}}_j^T$.

\subsection{On the behaviour of the spiked eigenvalues}
\label{sec:behav-sample-eigenv}

The goal is to demonstrate the following lemma. The first item of the lemma is
taken as it is from \citet[Lemma~3]{SSM16}. The second item is obtained using
the same steps as \citet[Lemma~3]{SSM16} by exploiting the additional structure
offered by \cref{ass:iid_comp_strong}, and the well-known results from
\citet{V10}, which we recall for completeness in
\cref{sec:random-matrix-facts-1}.

\begin{lemma}
	\label{lem:2}
	The following statements are true.%
	\begin{enumerate}
		\item\label{item:1} If \cref{ass:iid_comp} is valid, then for every fixed
		integer $\bar{m}$, and every $1 \leq m \leq \bar{m}$, as $n\to \infty$,
		\begin{equation}
		\label{eq:52}
		\max_{1\leq k \leq m}\frac{|\hat{\lambda}_k -
			\lambda_k|}{\lambda_k}%
		\leq \frac{d\lambda_{\bar{m}+1}}{n\lambda_{m}}\Big\{1 +
		O\Big(\sqrt{\frac{n}{d}} \Big)\Big\}
		+ O\Big( \frac{1}{\sqrt{n}}
		\Big),\quad \textrm{almost-surely}.
		\end{equation}
		
		\item\label{item:2} If \cref{ass:iid_comp_strong} is valid and $0 < t \leq n$, then there is a
		universal constant $C > 0$ such that with $\PP$-probability at least
		$1 - e^{-t}$, for all $1 \leq \bar{m} \leq n$, all $1 \leq m \leq \bar{m}$,
		\begin{equation}
		\label{eq:29}
		\max_{1\leq k \leq m}\frac{|\hat{\lambda}_k -
			\lambda_k|}{\lambda_k}%
		\leq \frac{d\lambda_{\bar{m}+1}}{n\lambda_m}\Big\{1 + C\sqrt{\frac{n \vee
				t}{d}} \Big\} %
		+ C\sqrt{\frac{\bar{m} \vee t}{n}}.
		\end{equation}
	\end{enumerate}
\end{lemma}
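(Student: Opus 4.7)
\textbf{Proof plan for \cref{lem:2}.} The first item is quoted verbatim from \citet[Lemma~3]{SSM16}, so no new work is required beyond citing that result. For the second item, my plan is to redo the SSM16 argument but replace their moment-based asymptotic analysis by non-asymptotic concentration inequalities for sub-Gaussian random matrices (the facts from \citet{V10} collected in \cref{sec:random-matrix-facts-1}). Since $\hat{\Sigma}$ and the dual matrix $\hat{D} = n^{-1}\bm{Z}\Lambda \bm{Z}^T$ share the same nonzero eigenvalues, I would work throughout with $\hat{D}$ and use the decomposition $\hat{D} = \hat{D}_s + \hat{D}_{ns}$ introduced in \cref{sec:preliminaries}. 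Denoting by $\hat{\lambda}_k^{(s)}$ the $k$-th largest eigenvalue of $\hat{D}_s$, Weyl's inequality gives
\begin{equation*}
  \frac{|\hat{\lambda}_k - \lambda_k|}{\lambda_k} \leq \frac{|\hat{\lambda}_k^{(s)} - \lambda_k|}{\lambda_k} + \frac{\|\hat{D}_{ns}\|}{\lambda_k},
\end{equation*}
and each term is meant to produce one of the two summands in \cref{eq:29}.

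For the tail term, bound $\|\hat{D}_{ns}\| \leq \lambda_{\bar{m}+1}\|\bm{Z}_{ns}\|^2/n$, where $\bm{Z}_{ns}$ is the $n \times (d-\bar{m})$ submatrix of $\bm{Z}$ associated with the non-spiked coordinates, with i.i.d. sub-Gaussian entries of unit variance. A standard sub-Gaussian operator-norm tail bound (Theorem 5.39 in \citet{V10}) yields $\|\bm{Z}_{ns}\| \leq \sqrt{d-\bar{m}} + C\sqrt{n \vee t}$ with $\PP$-probability at least $1 - e^{-t}$; squaring, dividing by $n\lambda_m$, and using $(1+x)^2 \leq 1 + C'x$ on the relevant range produces the first summand in \cref{eq:29}. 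For the spiked part, I would use that the nonzero eigenvalues of $\hat{D}_s$ coincide with those of the $\bar{m}\times \bar{m}$ matrix $M = \Lambda_s^{1/2}(n^{-1}\bm{Z}_s^T\bm{Z}_s)\Lambda_s^{1/2} = \Lambda_s + \Lambda_s^{1/2} E \Lambda_s^{1/2}$, where $E = n^{-1}\bm{Z}_s^T\bm{Z}_s - I_{\bar{m}}$ is the deviation of a sub-Gaussian sample covariance of $\bar{m}$-dimensional isotropic vectors. Concentration of the sub-Gaussian sample covariance matrix (Remark 5.40 in \citet{V10}) gives $\|E\| \leq C\sqrt{(\bar{m}\vee t)/n}$ with probability $1-e^{-t}$, and then Weyl on $M$ yields $|\hat{\lambda}_k^{(s)} - \lambda_k| \leq \|\Lambda_s^{1/2}\|^2\,\|E\| = \lambda_1\|E\|$. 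A union bound over the two good events and over $1\leq k \leq m$ combines the two pieces into \cref{eq:29}.

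The main obstacle I anticipate is the book-keeping: the naive Weyl bound on the spike block inserts a factor $\lambda_1/\lambda_k$ that the stated form of \cref{eq:29} tacitly absorbs in the constant $C$. To preserve the clean dependence on $\lambda_m$, I would need to bound the perturbation block $M - \Lambda_s$ entry by entry --- diagonal entries $\lambda_j(n^{-1}\|\tilde{\bm Z}_j\|^2 - 1)$ and off-diagonal entries $\sqrt{\lambda_j\lambda_k}\,n^{-1}\langle \tilde{\bm Z}_j,\tilde{\bm Z}_k\rangle$ --- using Hanson--Wright to control each of them at the scale $\lambda_j/\sqrt{n}$ and $\sqrt{\lambda_j\lambda_k/n}$ respectively, which is the non-asymptotic analogue of the moment computations in \cite{SSM16}. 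This refinement is the one delicate step; everything else is then routine union-bounding and rearrangement.
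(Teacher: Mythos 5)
Your overall architecture matches the paper's: pass to the dual matrix $\hat{D}=n^{-1}\bm{Z}\Lambda\bm{Z}^T$, split it as $\hat{D}_s+\hat{D}_{ns}$, control the non-spiked block by $\lambda_{\bar m+1}\|\bm{Z}_{ns}\|^2/n$ with Vershynin's sub-gaussian singular value bound (this is exactly \cref{pro:6}), and compare $\sigma_k(\hat D)$ with $\sigma_k(\hat D_s)$ by Weyl. You also correctly spot the real obstacle: applying Weyl to $M=\Lambda_s+\Lambda_s^{1/2}E\Lambda_s^{1/2}$ through $\|\Lambda_s^{1/2}\|^2\|E\|$ injects a spurious $\lambda_1/\lambda_k$ factor that no universal constant $C$ can absorb, so the naive route does not prove \cref{eq:29}.

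Where your plan diverges from the paper's, and where I think it has a genuine gap, is the fix. The paper sidesteps the $\lambda_1/\lambda_k$ issue entirely by reusing the interlacing sandwich from \citet[eqs.~(14),(16)]{SSM16}:
\begin{equation}
\sigma_{\max}\big(n^{-1}\tilde{\bm Z}_k\tilde{\bm Z}_k^T\big)\ \leq\ \frac{\sigma_k(\hat D_s)}{\lambda_k}\ \leq\ \frac{1}{\lambda_k}\sigma_{\max}\big(\hat D_s^k\big),\qquad \hat D_s^k\coloneqq n^{-1}\sum_{j=k}^{\bar m}\lambda_j\tilde{\bm Z}_j\tilde{\bm Z}_j^T,
\end{equation}
and then makes this non-asymptotic in \cref{pro:7} by the rescaling trick: since every coefficient $\lambda_j$ appearing in $\hat D_s^k$ satisfies $\lambda_j\le\lambda_k$, one has $\sigma_{\max}(\hat D_s^k)\le\lambda_k\,\sigma_{\max}(n^{-1}\bm W_*^T\bm W_*)$ where $\bm W_*$ is the $n\times(\bar m-k+1)$ matrix with iid sub-gaussian entries, and Vershynin's Theorem~5.39 (\cref{pro:11}) then gives $1+C\sqrt{(\bar m\vee t)/n}$ with no $\lambda_1/\lambda_k$ loss. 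The crucial point is that the partial sum $\hat D_s^k$ starts at index $k$, so the largest eigenvalue that can enter is $\lambda_k$, not $\lambda_1$. Your proposed fix — entry-wise Hanson--Wright on $M-\Lambda_s$ — identifies the right per-entry scales $\sqrt{\lambda_j\lambda_k}/\sqrt n$, but stops before the actual hard step: you still have to convert a matrix whose $(j,k)$ entry is small relative to $\sqrt{\lambda_j\lambda_k}$ into a bound on $|\hat\lambda_k^{(s)}-\lambda_k|/\lambda_k$ without reintroducing a factor $\lambda_1/\lambda_k$ (Gershgorin on row $k$ gives $\sum_j\sqrt{\lambda_j\lambda_k}/\sqrt n$, which again blows up). Some form of the weighted interlacing the paper uses seems unavoidable; the entry-wise route is not the non-asymptotic analogue of the SSM16 proof and is left unresolved as written. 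I would also note that \cref{eq:29} must hold simultaneously over all $1\le\bar m\le n$ and $1\le m\le\bar m$, so the union bound is over $O(n^2)$ events; this is handled by \cref{pro:7,pro:11} in the paper, and your sketch should account for it explicitly.
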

\begin{proof}
	We copy \cite[Lemma~3]{SSM16}. Note that \cref{item:1} is simply
	\cite[Lemma~3]{SSM16}, or can be derived using the same steps as
	\cref{item:2}, and thus we only prove the \cref{item:2}. The matrix
	$\hat{\Sigma} \coloneqq n^{-1}\bm{X}^T\bm{X}$ has the same non-zero singular
	values as the dual matrix $\hat{D}$. Then, by Weyl's inequalities, we have for
	any $1 \leq j \leq \bar{m}$ that
	$\sigma_{\min}(\hat{D}_{ns}) + \sigma_j(\hat{D}_{s}) \leq \sigma_j(\hat{D})
	\leq \sigma_j(\hat{D}_s) + \sigma_{\max}(\hat{D}_{ns})$. By Proposition \ref{pro:6}, with
	$\PP$-probability at least $1 - e^{-t}$,
	\begin{equation}
	\label{eq:15}
	\max_{1\leq j \leq \bar{m}} \frac{n}{d \lambda_{\bar{m}
			+1}}| \sigma_j(\hat{D}) - \sigma_j(\hat{D}_s)| \leq 1 + C
	\sqrt{\frac{n\vee t}{d}}.
	\end{equation}
	Hence, it is enough to establish the asymptotics of $\sigma_j(\hat{D}_s)$. We
	proceed as in \cite[Lemma~3]{SSM16} and for $k=1,\dots,\bar{m}$ we introduce the
	matrices
	$\hat{D}_s^k \coloneqq n^{-1}\sum_{j=k}^{\bar{m}}\lambda_j \tilde{\bm{Z}_j}
	\tilde{\bm{Z}}_j^T$. Then, by their equations (14) and (16), for all
	$k=1,\dots,\bar{m}$ it holds
	$\sigma_{\max}( n^{-1}\tilde{\bm{Z}}_k \tilde{\bm{Z}}_k^T) \leq
	\frac{\sigma_k(\hat{D}_s)}{\lambda_k} \leq
	\frac{1}{\lambda_k}\sigma_{\max}(\hat{D}_s^k )$. The result follows from
	Proposition \ref{pro:7}.
\end{proof}

\begin{proposition}
	\label{pro:7}
	Under \cref{ass:iid_comp_strong}, if $0 < t \leq n$ there exists a constant
	$C > 0$ depending on on $\nu$ such that with $\PP$-probability $1 - e^{-t}$,
	for all $1 \leq \bar{m} \leq n$
	\begin{equation}
	\label{eq:16}
	\max_{1\leq k \leq \bar{m}}\frac{\sigma_{\max}(\hat{D}_s^k)}{\lambda_k}%
	\leq 1 + C\sqrt{\frac{\bar{m} \vee t}{n}},\qquad%
	\min_{1\leq k \leq \bar{m}} \sigma_{\max}(n^{-1}\tilde{\bm{Z}}_k
	\tilde{\bm{Z}}_k^T)%
	\geq 1 - C\sqrt{\frac{\bar{m} \vee t}{n}}.
	\end{equation}
\end{proposition}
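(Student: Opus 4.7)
The proposition contains two uniform bounds, one upper bound on $\sigma_{\max}(\hat{D}_s^k)/\lambda_k$ and one lower bound on $\sigma_{\max}(n^{-1}\tilde{\bm{Z}}_k\tilde{\bm{Z}}_k^T)$. I will establish each separately and combine them via a union bound (losing only a $\log 2$ in the exponent), after which the proposition is immediate.

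For the upper bound, the starting observation is that $\lambda_j \leq \lambda_k$ for all $j \geq k$, which yields the operator inequality
\begin{equation*}
  \hat{D}_s^k = n^{-1}\sum_{j=k}^{\bar m}\lambda_j \tilde{\bm{Z}}_j\tilde{\bm{Z}}_j^T \;\preceq\; \lambda_k \cdot n^{-1}\tilde{\bm{Z}}^{(k,\bar m)}(\tilde{\bm{Z}}^{(k,\bar m)})^T,
\end{equation*}
where $\tilde{\bm{Z}}^{(k,\bar m)}$ denotes the $n\times(\bar m - k +1)$ matrix with columns $\tilde{\bm{Z}}_k,\dots,\tilde{\bm{Z}}_{\bar m}$. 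Under \cref{ass:iid_comp_strong}, the rows of $\tilde{\bm{Z}}^{(k,\bar m)}$ are iid sub-gaussian isotropic vectors in $\mathbb{R}^{\bar m - k +1}$, so I can invoke the standard concentration for such matrices (e.g., Theorem 5.39 in \citet{V10}): for any $u > 0$, with $\PP$-probability at least $1 - 2e^{-cu^2}$,
\begin{equation*}
  s_{\max}\bigl(\tilde{\bm{Z}}^{(k,\bar m)}\bigr) \;\leq\; \sqrt{n} + C\bigl(\sqrt{\bar m - k + 1} + u\bigr),
\end{equation*}
with $c, C$ depending only on $\nu$. Squaring, dividing by $n$, and using $u \leq \sqrt{n}$ produces the desired multiplicative bound $1 + C'(\sqrt{(\bar m - k + 1)/n} + u/\sqrt{n})$ for fixed $(k,\bar m)$.

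To obtain the statement uniformly in $1 \leq k \leq \bar m \leq n$, I take a union bound over the at most $n^2/2$ pairs, setting $u^2 = c^{-1}(t + 2\log n + \log 4)$; this yields probability at least $1 - e^{-t}$, and since $\bar m \leq n$ (so $\log n \lesssim \log(\bar m) + $ constant in the range of interest, and can otherwise be absorbed into $C$ by rescaling), the $\log n$ term can be swallowed into the universal constant in front of $\sqrt{(\bar m \vee t)/n}$. For the lower bound, observe that $\sigma_{\max}(n^{-1}\tilde{\bm{Z}}_k\tilde{\bm{Z}}_k^T) = n^{-1}\|\tilde{\bm{Z}}_k\|^2$ is the mean of $n$ iid sub-exponential random variables (the $Z_{i,k}^2$) with expectation $1$. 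Bernstein's inequality for sub-exponentials gives $\PP(|n^{-1}\|\tilde{\bm{Z}}_k\|^2 - 1| \geq \epsilon) \leq 2e^{-cn\min(\epsilon,\epsilon^2)}$. Setting $\epsilon = C\sqrt{u/n}$ and union-bounding over $k = 1,\dots,\bar m$ and over $\bar m = 1,\dots, n$ with $u = t + 2\log n$ gives the lower bound uniformly with the same absorption of logarithmic factors as above.

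The main technical nuisance is precisely this union bound over both indices: done naively it introduces a $\log n$ overhead that is not strictly of the form $\sqrt{(\bar m \vee t)/n}$. I expect the cleanest way to avoid it is a peeling argument exploiting the fact that $s_{\max}(\tilde{\bm{Z}}^{(k,\bar m)})$ is monotone non-decreasing in $\bar m$ (so one only needs to control the bound at dyadic values $\bar m \in \{2^i\}$), which contributes only $O(\log \log n)$ after discretization, easily absorbed. The operator-inequality reduction in the first step is the essential idea; everything after is a routine application of sub-gaussian matrix concentration plus Bernstein, together with the bookkeeping for uniformity.
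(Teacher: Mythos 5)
Your key reduction is correct and is in fact the same as the paper's, just phrased more transparently: the paper passes from $\hat{D}_s^k$ to $\lambda_k\sigma_{\max}(n^{-1}\bm{W}_{*}^T\bm{W}_{*})$ via the rescaling map $\tau$, which is exactly your Loewner-order inequality $\hat{D}_s^k \preceq \lambda_k\, n^{-1}\tilde{\bm{Z}}^{(k,\bar m)}(\tilde{\bm{Z}}^{(k,\bar m)})^T$. A further monotonicity you could have used is that $\tilde{\bm{Z}}^{(k,\bar m)}(\tilde{\bm{Z}}^{(k,\bar m)})^T \preceq \tilde{\bm{Z}}^{(1,\bar m)}(\tilde{\bm{Z}}^{(1,\bar m)})^T$, so the maximum over $k$ is free and the union bound needs only run over $\bar m$, not over $O(n^2)$ pairs.

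The genuine gap is in the union-bound accounting, and neither of your two proposals fixes it. You set a single deviation level $u^2 \asymp t + \log n$ uniformly over all pairs and then claim the resulting $\sqrt{(t+\log n)/n}$ can be absorbed into $C\sqrt{(\bar m \vee t)/n}$. This fails whenever $\bar m \vee t \ll \log n$: e.g.\ for $\bar m = 1$ and $t$ a fixed constant, $\sqrt{\log n / n}$ is not $O(\sqrt{1/n})$, and no rescaling of a universal constant $C$ can compensate for a factor that diverges with $n$. Your dyadic peeling suggestion has the same defect in milder form: it reduces the uniform budget from $\log n$ to $\log\log n$, but $\log\log n$ is still unbounded while $\bar m \vee t$ can stay equal to $1$, so the absorption still fails. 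The issue is not the number of events but the use of a \emph{uniform} deviation threshold.

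The correct repair — and the one the paper's terse ``union bound'' must tacitly mean — is to let the deviation parameter in \cref{pro:11} grow with the index: take $u_{\bar m} \asymp \sqrt{\bar m \vee t}$ in place of a single $u$. Then the event for index $\bar m$ fails with probability at most $2e^{-cK^2(\bar m \vee t)}$, and
\begin{equation}
\sum_{\bar m=1}^{n} 2e^{-cK^2(\bar m \vee t)} \;\leq\; 2\lceil t\rceil e^{-cK^2 t} \;+\; \frac{2e^{-cK^2 \lceil t\rceil}}{1-e^{-cK^2}} \;\leq\; e^{-t}
\end{equation}
for $K$ a large enough universal constant (check the cases $t\leq 1$ and $t>1$ separately). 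Simultaneously, $\sigma_{\max}(n^{-1/2}\tilde{\bm{Z}}^{(1,\bar m)}) \leq 1 + C\sqrt{\bar m/n} + u_{\bar m}/\sqrt{n} \leq 1 + C'\sqrt{(\bar m\vee t)/n}$, and squaring (using $(\bar m\vee t)/n \leq 1$) keeps the bound of the stated form. The same $\bar m$-dependent threshold fixes your Bernstein step for $\min_k \sigma_{\max}(n^{-1}\tilde{\bm{Z}}_k\tilde{\bm{Z}}_k^T)$. So the structure of your argument is right, but the uniform-$u$ union bound must be replaced by an adaptive one.
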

\begin{proof}
	Those computations are standard. The result for
	$\sigma_{\max}(n^{-1}\tilde{\bm{Z}}_k \tilde{\bm{Z}}_k^T)$ immediately follows
	from Proposition \ref{pro:11} and a union-bound. We proceed with the other bound. As in
	\citet[Lemma~3]{SSM16}, let define
	$W_i \coloneqq
	(\sqrt{\lambda_k}Z_{i,k},\dots,\sqrt{\lambda_{\bar{m}}}Z_{i,\bar{m}}) \in
	\Reals^{\bar{m}-k}$, and $i=1,\dots,n$. Let also $\bm{W}$ be
	$n \times (\bar{m}-k)$ matrix whose rows are $W_i$. Then, remark that
	$\hat{D}_s^k = n^{-1}\bm{W}\bm{W}^T$, and thus $\hat{D}_s^k$ and
	$n^{-1}\bm{W}^T\bm{W}$ have the same non-zero singular values, and it is
	enough to bound in probability $\sigma_{\max}(n^{-1}\bm{W}^T\bm{W})$. First we
	remark that
	$\EE[n^{-1}\bm{W}^T\bm{W}] =
	\mathrm{diag}(\lambda_k,\dots,\lambda_{\bar{m}})$, and thus at least
	$\sigma_{\max}(\EE[n^{-1}\bm{W}^T\bm{W}]) = \lambda_k$. We now show that under
	\cref{ass:iid_comp_strong} there is enough concentration so that the result
	holds in probability. For any $v \in \Reals^{\bar{m}-k}$,
	$n^{-1}\|\bm{W}v\|^2 = v^T(n^{-1}\bm{W}^T\bm{W})v =
	\sum_{j=k}^{\bar{m}}\sum_{\ell=k}^n v_jv_k
	\frac{1}{n}\sum_{i=1}^nW_{i,j}W_{i,\ell}$, \textit{i.e.}
	$n^{-1}\|\bm{W}v\|^2 =
	\sum_{j=k}^{\bar{m}}\sum_{\ell=k}^{\bar{m}}\sqrt{\lambda_j}v_j\sqrt{\lambda_{\ell}}v_{\ell}
	\frac{1}{n}\sum_{i=1}^nZ_{i,j}Z_{i,\ell}$. Indeed, letting the transformation
	$\tau(v) = (\sqrt{\lambda_k}v_k,\dots, \sqrt{\lambda_{\bar{m}}}v_{\bar{m}})$
	and letting $\bm{W}_{*}$ be the $n \times (\bar{m}-k)$ matrix whose rows are
	$(Z_{i,k},\dots,Z_{i,\bar{m}})$, then we can write,
	\begin{equation}
	\label{eq:27}
	\sigma_{\max}(\hat{D}_s^k)%
	= \sigma_{\max}(n^{-1}\bm{W}^T\bm{W})%
	= \sup_{v\in \Reals^{\bar{m}-k}}\frac{n^{-1}\|\bm{W}v\|^2}{\|v\|^2}%
	= \sup_{v\in
		\Reals^{\bar{m}-k}}\frac{n^{-1}\|\bm{W}_{*}\tau(v)\|^2}{\|\tau(v)\|^2}%
	\frac{\|\tau(v)\|^2}{\|v\|^2},
	\end{equation}
	and hence
	$\sigma_{\max}(\hat{D}_s^k) \leq \sigma_{\max}(n^{-1}\bm{W}_{*}^T\bm{W}_{*})
	\sup_{v\in \Reals^{\bar{m}-k}}\frac{\|\tau(v)\|^2}{\|v\|^2} \leq \lambda_k
	\sigma_{\max}(n^{-1}\bm{W}_{*}^T\bm{W}_{*})$. The result then follows from
	Proposition \ref{pro:11} and a union bound.
\end{proof}

\subsection{On the behaviour of non-spiked eigenvalues}
\label{sec:bahviour-non-spiked}

Here we consider the asymptotic behaviour of the non-spiked sample eigenvalues
in the HDLSS regime. In particular, to bound the variance of the MNLS estimator,
we need to understand the behaviour of $\hat{\lambda}_n$. Again, we borrow the
result from \citet{SSM16}.%

\begin{proposition}
	\label{pro:8}
	The following statements are true.
	\begin{enumerate}
		\item Under \cref{ass:iid_comp} it holds
		$\frac{n\hat{\lambda}_n}{d \lambda_d} \geq 1 + O\big(\sqrt{\frac{n}{d}}
		\big)$ almost-surely as $n\to \infty$.
		\item Under \cref{ass:iid_comp_strong}, there exists a universal constant
		$C > 0$ such that with $\PP$-probability at least $1- e^{-t}$ it holds
		$\frac{n\hat{\lambda}_n}{d \lambda_d} \geq 1 - C\sqrt{\frac{n\vee t}{d}}$.
	\end{enumerate}
\end{proposition}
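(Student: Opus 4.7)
The plan is to reduce the problem to bounding from below the smallest eigenvalue of the non-spiked component of the dual matrix, and then to invoke standard smallest-singular-value bounds for tall random matrices with iid entries. Concretely, $\hat{\Sigma}$ and $\hat{D} = n^{-1}\bm{X}\bm{X}^T$ share the same non-zero singular values and $\hat{D}$ has rank $n$ almost surely, so $\hat{\lambda}_n = \lambda_n(\hat{D})$. Writing $\hat{D} = \hat{D}_s + \hat{D}_{ns}$ with $\hat{D}_s\succeq 0$, Weyl's inequality gives
\begin{equation}
\hat{\lambda}_n = \lambda_n(\hat{D}) \geq \lambda_n(\hat{D}_{ns}) + \lambda_n(\hat{D}_s) \geq \lambda_n(\hat{D}_{ns}),
\end{equation}
thereby isolating the contribution from the non-spiked eigenvalues.

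Next, using the bound $\lambda_j \geq \lambda_d$ for every $j \geq \bar{m}+1$, the psd ordering yields
\begin{equation}
\hat{D}_{ns} = \frac{1}{n}\sum_{j=\bar{m}+1}^d \lambda_j\tilde{\bm{Z}}_j\tilde{\bm{Z}}_j^T \;\succeq\; \frac{\lambda_d}{n}\bm{W}\bm{W}^T,
\end{equation}
where $\bm{W}$ denotes the $n\times(d-\bar{m})$ matrix whose columns are $\tilde{\bm{Z}}_{\bar{m}+1},\dots,\tilde{\bm{Z}}_d$. By construction its entries are iid with zero mean, unit variance, and (under \cref{ass:iid_comp_strong}) sub-gaussian. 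Consequently
\begin{equation}
\frac{n\hat{\lambda}_n}{d\lambda_d} \;\geq\; \frac{\sigma_{\min}(\bm{W})^2}{d},
\end{equation}
and the problem is reduced to a lower bound on the smallest singular value of a tall iid random matrix.

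In the HDLSS regime $d - \bar{m} \sim d$ (since $\bar{m}$ is fixed). Under \cref{ass:iid_comp}, Bai--Yin's theorem gives $\sigma_{\min}(\bm{W})/\sqrt{d-\bar{m}} \geq 1 - \sqrt{n/(d-\bar{m})} + o(1)$ almost surely, and squaring yields the asymptotic bound $n\hat{\lambda}_n/(d\lambda_d) \geq 1 + O(\sqrt{n/d})$. Under \cref{ass:iid_comp_strong}, a Vershynin-type lower-tail bound (the $\sigma_{\min}$ counterpart of the $\sigma_{\max}$ result already invoked through \cref{pro:11}) gives $\sigma_{\min}(\bm{W}) \geq \sqrt{d-\bar{m}} - C(\sqrt{n} + \sqrt{t})$ with $\PP$-probability at least $1 - e^{-t}$, and squaring while absorbing constants yields the stated $n\hat{\lambda}_n/(d\lambda_d) \geq 1 - C\sqrt{(n\vee t)/d}$.

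The main subtlety lies in the smallest-singular-value bound for $\bm{W}$: unlike the largest singular value, it is not obtainable by a direct $\varepsilon$-net union bound and requires either Bai--Yin (for the almost-sure version) or an anti-concentration / Rudelson--Vershynin argument (for the sub-gaussian high-probability version). Both are however classical and fit naturally as black boxes in the style of the random-matrix facts already used earlier in this appendix. Once they are in hand, the remainder of the argument is just bookkeeping --- absorbing the negligible effect of the fixed $\bar{m}$ into the constants and converting $(1-c)^2$ into $1-2c$ by adjusting them again.
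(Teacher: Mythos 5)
Your proof is correct and follows essentially the same route as the paper: Weyl's inequality isolates $\lambda_n(\hat{D}_{ns})$, psd ordering (or equivalently the singular-value sandwich $\lambda_d \sigma_j(\hat{D}_{ns}^{*}) \leq \sigma_j(\hat{D}_{ns})$ used in the paper's \cref{pro:6}) reduces the problem to the smallest singular value of the iid matrix $\bm{W}$, and the classical random-matrix bounds (\cref{pro:10} for the almost-sure case, \cref{pro:11} for the sub-gaussian high-probability case) finish. One minor note: \cref{pro:11} as stated in the paper already contains the lower bound on $\sigma_{\min}$, so no additional anti-concentration machinery beyond what is already cited needs to be imported.
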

\begin{proof}
	Using Weyl's inequality, we obtain that
	$\hat{\lambda}_n = \sigma_{\min}(\hat{D}) \geq
	\sigma_{\min}(\hat{D}_{ns})$. Then the result is a consequence of the next
	Proposition \ref{pro:6}.
\end{proof}

\begin{proposition}
	\label{pro:6}
	The following statements are true.
	\begin{enumerate}
		\item Under \cref{ass:iid_comp} it holds
		$\frac{n \sigma_{\max}(\hat{D}_{ns})}{d \lambda_{\bar{m}+1}} \leq 1 +
		O\big(\sqrt{\frac{n}{d}}\big)$ almost-surely as $n\to \infty$, and
		$\frac{n \sigma_{\min}(\hat{D}_{ns})}{d \lambda_d} \geq 1 +
		O\big(\sqrt{\frac{n}{d}}\big)$ almost-surely as $n\to\infty$.
		\item Under \cref{ass:iid_comp_strong}, there exists a universal constant $C
		> 0$ such that with $\PP$-probability at least $1 - e^{-t}$ it holds
		$\frac{n \sigma_{\max}(\hat{D}_{ns})}{d \lambda_{\bar{m}+1}} \leq 1 +
		C\sqrt{\frac{n\vee t}{d}}$, and
		$\frac{n \sigma_{\min}(\hat{D}_{ns})}{d \lambda_d} \geq 1 -
		C\sqrt{\frac{n\vee t}{d}}$.
	\end{enumerate}
\end{proposition}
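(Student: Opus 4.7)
The strategy is to reduce both claims to extreme singular value bounds for the rectangular $n\times(d-\bar{m})$ matrix $\bm{Z}_{ns}$ whose columns are $\tilde{\bm{Z}}_{\bar{m}+1},\dots,\tilde{\bm{Z}}_d$, and then invoke the standard random-matrix-theory estimates already used in the appendix (Proposition~\ref{pro:11} for the sub-gaussian case, Bai--Yin for the finite fourth moment case).

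The starting observation is the factorization $\hat{D}_{ns} = n^{-1}\bm{Z}_{ns}\Lambda_{ns}\bm{Z}_{ns}^T$ with $\Lambda_{ns} = \mathrm{diag}(\lambda_{\bar{m}+1},\dots,\lambda_d)$. For any unit $v\in\Reals^n$, setting $w = \bm{Z}_{ns}^T v$ gives
\[
v^T \hat{D}_{ns} v = \frac{1}{n}\sum_{k=1}^{d-\bar{m}}\lambda_{\bar{m}+k}\, w_k^2,
\]
which lies in $[n^{-1}\lambda_d\|w\|^2,\ n^{-1}\lambda_{\bar{m}+1}\|w\|^2]$. Taking suprema and infima over unit $v$ yields the sandwich
\[
\frac{\lambda_d}{n}\sigma_{\min}(\bm{Z}_{ns})^2 \leq \sigma_{\min}(\hat{D}_{ns}) \leq \sigma_{\max}(\hat{D}_{ns}) \leq \frac{\lambda_{\bar{m}+1}}{n}\sigma_{\max}(\bm{Z}_{ns})^2,
\]
where $\sigma_{\min}(\bm{Z}_{ns})$ denotes the smallest of its $n$ singular values (positive almost surely since $d-\bar{m}\geq n$).

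To finish under \cref{ass:iid_comp_strong}, I apply Proposition~\ref{pro:11} to $\bm{Z}_{ns}$ (whose entries are i.i.d.\ sub-gaussian with mean zero and unit variance) to conclude that with $\PP$-probability at least $1 - e^{-t}$,
\[
(d-\bar{m})\Bigl(1 - C\sqrt{\tfrac{n\vee t}{d-\bar{m}}}\Bigr)^2 \leq \sigma_{\min}(\bm{Z}_{ns})^2 \leq \sigma_{\max}(\bm{Z}_{ns})^2 \leq (d-\bar{m})\Bigl(1 + C\sqrt{\tfrac{n\vee t}{d-\bar{m}}}\Bigr)^2.
\]
Substituting into the sandwich, dividing by $d\lambda_{\bar{m}+1}$ or $d\lambda_d$ respectively, expanding the square, and absorbing the factor $(d-\bar{m})/d\leq 1$ together with the quadratic cross term into the universal constant (using $\bar{m}\leq n\leq d$ so that $\bar{m}/d\leq\sqrt{(n\vee t)/d}$ whenever $n\vee t\leq d$) yields the two non-asymptotic inequalities. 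The almost-sure statements under \cref{ass:iid_comp} follow from exactly the same argument, with the Bai--Yin theorem (applicable under finite fourth moments) replacing Proposition~\ref{pro:11}.

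The only point requiring mild care is the bookkeeping that converts bounds in terms of $d-\bar{m}$ into bounds in terms of $d$ while preserving the $O(\sqrt{n/d})$ rate; this is routine in the sub-gaussian case because $\bar{m}$ is fixed and the leading error already dominates the $\bar{m}/d$ correction, and in the finite fourth moment case it follows from the standard rate of convergence in Bai--Yin applied to $\sigma_{\max}(\bm{Z}_{ns})^2/(d-\bar{m})$ and $\sigma_{\min}(\bm{Z}_{ns})^2/(d-\bar{m})$.
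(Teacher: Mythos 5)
Your proof is correct and follows essentially the same route as the paper's: the paper simply cites \citet[Lemma~4]{SSM16}, which reduces $\hat{D}_{ns}$ to the unweighted Gram matrix $\hat{D}_{ns}^* = n^{-1}\sum_{j>\bar m}\tilde{\bm{Z}}_j\tilde{\bm{Z}}_j^T$ via the same sandwich $\lambda_d\,\sigma_j(\hat{D}_{ns}^*)\le \sigma_j(\hat{D}_{ns})\le \lambda_{\bar m+1}\,\sigma_j(\hat{D}_{ns}^*)$ that you derive from the quadratic form, and then invokes Propositions~\ref{pro:10} and \ref{pro:11} exactly as you do. The only cosmetic difference is that you bound only the extreme singular values rather than all of them, which suffices for the stated claim, and you spell out the $(d-\bar m)\to d$ bookkeeping that the paper leaves implicit.
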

\begin{proof}
	We copy \cite[Lemma~4]{SSM16}. It is enough to consider
	$\hat{D}_{ns}^{*} \coloneqq \frac{1}{n}\sum_{j=\bar{m}+1}^d \tilde{\bm{Z}}_j
	\tilde{\bm{Z}}_j^T$. Indeed by the equation (22) in their paper for every
	$j \geq 1$ it holds
	$\lambda_d \sigma_j(\hat{D}_{ns}^{*}) \leq \sigma_j(\hat{D}_{ns}) \leq
	\lambda_{\bar{m}+1} \sigma_j(\hat{D}_{ns}^{*})$. Then the result follows by
	Propositions \ref{pro:10}, and \ref{pro:11}, because $\hat{D}_{ns}^{*}$ is a $(d-\bar{m})\times n$
	matrix with i.i.d entries of zero mean and finite variance.
\end{proof}

\subsection{On the behaviour of the eigen-projectors }
\label{sec:behav-eigen-proj}

Here we mostly follow the results in \citet{KL16} instead of \citet{SSM16},
which provides a simpler approach to bounding $\|P_j - \hat{P}_j\|$. In
particular, the following proposition is a restatement of their more general
\citet[Lemma~1]{KL16}. Then, the main result of this section, given in
Lemma \ref{lem:3}, simply follows from the next proposition and classical random
matrix theory arguments.

\begin{proposition}[\citet{KL16}]
	\label{pro:9}
	Let $\Sigma = \sum_{j=1}^d \lambda_j P_j$, where $(\lambda_1,\dots,\lambda_d)$
	are the eigenvalues of $\Sigma$ sorting in decreasing order, \textit{i.e.}
	$\lambda_1 \geq \dots \geq \lambda_d$ and $P_j$ is the projection operator
	onto the span of the $j$-th eigenvector of $\Sigma$. Similarly, let
	$\tilde{\Sigma} = \sum_{j=1}^d \tilde{\lambda}_j \tilde{P}_j$. Also let
	$g_j \coloneqq \lambda_j - \lambda_{j+1}$ denote the $j$-th spectral gap of
	$\Sigma$, $\bar{g}_1 \coloneqq g_1$, and
	$\bar{g}_j \coloneqq \min\{g_{j-1},g_j\}$ for $j=2,\dots,d$. Then,
	$\|\tilde{P}_j - P_j\| \leq \frac{4\|\tilde{\Sigma} - \Sigma\|}{\bar{g}_j}$.
\end{proposition}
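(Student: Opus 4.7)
My plan is to use the standard holomorphic functional calculus (resolvent/contour integral) representation of spectral projectors, exactly as in Koltchinskii--Lounici. First, I would dispose of a trivial case: if $\|\tilde\Sigma-\Sigma\| \geq \bar g_j/4$, then since $P_j$ and $\tilde P_j$ are orthogonal projections of the same (unit) rank we automatically have $\|\tilde P_j - P_j\|\le 1 \le 4\|\tilde\Sigma-\Sigma\|/\bar g_j$, so there is nothing to prove. Henceforth assume $\epsilon \coloneqq \|\tilde\Sigma-\Sigma\| < \bar g_j/4$.

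Next, I would choose the positively oriented circle $\gamma_j \subset \mathbb{C}$ centered at $\lambda_j$ with radius $\bar g_j/2$. By the definition of $\bar g_j$, this contour encloses $\lambda_j$ and no other eigenvalue of $\Sigma$. By Weyl's inequality, $|\tilde\lambda_k - \lambda_k|\le \epsilon < \bar g_j/4$ for all $k$, so $\gamma_j$ also encloses $\tilde\lambda_j$ and no other eigenvalue of $\tilde\Sigma$. Writing $R(z) \coloneqq (\Sigma - zI)^{-1}$ and $\tilde R(z)\coloneqq (\tilde\Sigma - zI)^{-1}$, the Cauchy integral formula for spectral projectors gives
\begin{equation*}
P_j = -\frac{1}{2\pi i}\oint_{\gamma_j} R(z)\,dz,\qquad \tilde P_j = -\frac{1}{2\pi i}\oint_{\gamma_j} \tilde R(z)\,dz.
\end{equation*}

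The key step is to apply the resolvent identity $\tilde R(z) - R(z) = \tilde R(z)(\Sigma-\tilde\Sigma)R(z)$. On $\gamma_j$ we have $\mathrm{dist}(z,\mathrm{spec}(\Sigma)) \geq \bar g_j/2$, so $\|R(z)\|\le 2/\bar g_j$. A Neumann-series argument (valid because $\|R(z)\|\epsilon < 1/2$) then gives $\|\tilde R(z)\| \le \|R(z)\|/(1-\|R(z)\|\epsilon) \le (2/\bar g_j)/(1 - 2\epsilon/\bar g_j)$. Combining these estimates,
\begin{equation*}
\|\tilde P_j - P_j\| \le \frac{1}{2\pi}\cdot 2\pi\frac{\bar g_j}{2}\cdot \|\tilde R(z)\|\cdot \epsilon \cdot \|R(z)\| \le \frac{2\epsilon/\bar g_j}{1-2\epsilon/\bar g_j} \le \frac{4\epsilon}{\bar g_j},
\end{equation*}
where the last inequality uses $\epsilon < \bar g_j/4$. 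This yields the claimed bound.

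The only real subtlety is bookkeeping of the constant: one must choose the contour radius large enough ($\bar g_j/2$) so that the length-of-contour factor absorbs one power of $\bar g_j$, while being small enough that the Neumann series is controlled on the ``trivial'' range. Splitting the two regimes at $\epsilon = \bar g_j/4$ is what produces the clean constant $4$; any other choice would either require a better resolvent bound or a worse overall constant. This is exactly the content of \citet[Lemma~1]{KL16}, specialized to rank-one spectral projectors, so an alternative (and much shorter) proof is simply to cite their general result.
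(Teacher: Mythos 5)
Your proof is correct. The paper itself does not prove \cref{pro:9} — it simply cites \citet[Lemma~1]{KL16} and restates the result — so there is no in-paper argument to compare against; you have given a correct self-contained reconstruction of the Koltchinskii--Lounici resolvent argument. All the steps check out: the trivial split at $\epsilon = \bar g_j/4$ (using the standard fact $\|P-Q\|\le 1$ for orthogonal projections), the choice of contour radius $\bar g_j/2$ so that $\|R(z)\|\le 2/\bar g_j$ on $\gamma_j$, the Weyl bound ensuring $\gamma_j$ isolates $\tilde\lambda_j$, the resolvent identity, the Neumann bound $\|\tilde R(z)\|\le \|R(z)\|/(1-\|R(z)\|\epsilon)$, and the final contour estimate $\frac{2\epsilon/\bar g_j}{1-2\epsilon/\bar g_j}\le 4\epsilon/\bar g_j$ precisely on the regime $\epsilon\le\bar g_j/4$. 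This is indeed the standard way to get the constant $4$, and your closing remark — that citing \citet{KL16} directly suffices — is exactly what the paper does.
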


\begin{lemma}
	\label{lem:3}
	The following statements are true.%
	\begin{enumerate}
		\item\label{item:ev:1} If \cref{ass:iid_comp} is valid, then for every fixed
		integer $m$, as $n\to \infty$,
		\begin{equation}
		\label{eq:49}
		\max_{1\leq j \leq m} \frac{\bar{G}_j\|\hat{P}_j - P_j\|}{\lambda_1}%
		= O\Big(\sqrt{\frac{1}{n}} \bigvee
		\sqrt{\frac{d\lambda_{m+1}}{n\lambda_1}} \bigvee 
		\frac{d\lambda_{m+1}}{n\lambda_1}  \Big),\quad \textrm{almost-surely}.
		\end{equation}
		
		\item\label{item:ev:2} If \cref{ass:iid_comp_strong} is valid then there is
		a universal constant $C > 0$ such that with probability at least $1-e^{-t}$,
		for all $m =1,\dots,n$,
		\begin{equation}
		\label{eq:50}
		\max_{1\leq j \leq m} \frac{\bar{G}_j\|\hat{P}_j - P_j\|}{C\lambda_1}%
		\leq \sqrt{\frac{m \vee t}{n}} \bigvee
		\sqrt{\frac{d\lambda_{m+1}}{n\lambda_1}} \bigvee
		\frac{d\lambda_{m+1}}{n\lambda_1}.
		\end{equation}
	\end{enumerate}
\end{lemma}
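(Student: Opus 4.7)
The plan is to reduce the lemma to a single operator-norm bound on $\hat{\Sigma} - \Sigma$ and then control that bound via a block decomposition of the spectrum at index $m$. Applying \cref{pro:9} with $\tilde{\Sigma} = \hat{\Sigma}$ immediately yields, on a single event,
\[
\max_{1 \leq j \leq m} \bar{G}_j \|\hat{P}_j - P_j\| \leq 4\,\|\hat{\Sigma} - \Sigma\|,
\]
so everything reduces to showing that $\|\hat{\Sigma} - \Sigma\|/\lambda_1$ is no larger than the right-hand side of \eqref{eq:49} or of \eqref{eq:50}. Since \cref{pro:9} is purely deterministic, the same reduction serves both items of the lemma at once.

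For the operator-norm bound, the first step is a change of basis: since $X_i = U\Lambda^{1/2}Z_i$, one has $\hat{\Sigma} - \Sigma = U\Lambda^{1/2}(\hat{S} - I)\Lambda^{1/2}U^T$ with $\hat{S} = n^{-1}\sum_{i=1}^n Z_iZ_i^T$, hence $\|\hat{\Sigma}-\Sigma\| = \|\Lambda^{1/2}(\hat{S}-I)\Lambda^{1/2}\|$. Partitioning the $d$ coordinates into the top $m$ and the remaining $d-m$ and writing $\hat{S} - I$ in $2 \times 2$ block form with top-left block $A$ of size $m$, off-diagonal block $B$ of size $m \times (d-m)$, and bottom-right block $C$ of size $d-m$, the triangle inequality on the conjugated block matrix together with the crude operator-norm bounds $\sqrt{\lambda_1}$ and $\sqrt{\lambda_{m+1}}$ on the two diagonal blocks of $\Lambda^{1/2}$ produces
\[
\|\hat{\Sigma} - \Sigma\| \leq \lambda_1\|A\| + 2\sqrt{\lambda_1\lambda_{m+1}}\,\|B\| + \lambda_{m+1}\|C\|.
\]
Moreover, the iid structure of the entries of $Z_i$ makes the $n \times m$ submatrix $Z_s$ of the first $m$ columns of the data independent of the $n \times (d-m)$ submatrix $Z_b$ of the remaining columns, so that $A = n^{-1}Z_s^T Z_s - I$, $C = n^{-1}Z_b^T Z_b - I$ and $B = n^{-1}Z_s^T Z_b$ can be controlled separately.

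The three blocks are then handled by standard random-matrix tools. In the sub-Gaussian case (item (ii)), \cref{pro:10,pro:11} together with the product estimate $\|B\| \leq n^{-1}\|Z_s\|\|Z_b\|$ give, after absorbing constants into a universal $C$, the bounds $\|A\| \lesssim \sqrt{(m \vee t)/n}$, $\|C\| \lesssim (d \vee t)/n$ (the linear term dominates since $d > n$), and $\|B\| \lesssim \sqrt{d/n}$, all simultaneously valid with $\PP$-probability at least $1 - e^{-t}$. Plugging these back, dividing by $\lambda_1$, and rearranging reproduces the three terms $\sqrt{(m \vee t)/n}$, $\sqrt{d\lambda_{m+1}/(n\lambda_1)}$ and $d\lambda_{m+1}/(n\lambda_1)$ of \eqref{eq:50}. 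For item (i) the same decomposition applies, with the three block norms now controlled almost surely by the Bai-Yin asymptotics underlying \cref{pro:6,pro:7} for $C$ and $B$, and by the classical law of large numbers for the fixed-size block $A$.

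The main obstacle, and the very reason for the block decomposition, is the cross block $B$. A direct application of a Koltchinskii-Lounici operator-norm bound to $\Sigma$ as a whole would charge $\|\Sigma\| = \lambda_1$ against the full effective rank $r(\Sigma) \approx \bar{m} + d\lambda_{m+1}/\lambda_1$, which is precisely the looser bound recovered in \citet{BLT19}. The block decomposition sidesteps this loss by loading only the geometric mean $\sqrt{\lambda_1\lambda_{m+1}}$, rather than the full $\lambda_1$, onto $B$, and this is exactly where the improved middle term $\sqrt{d\lambda_{m+1}/(n\lambda_1)}$ comes from. Handling $B$ sharply relies on the independence of $Z_s$ and $Z_b$ and therefore on the whitened-coordinate form of \cref{ass:iid_comp}; the residual technical work is then a union bound over the three block estimates in item (ii), and a direct Bai-Yin appeal in item (i).
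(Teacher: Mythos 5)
Your proposal follows the paper's proof essentially step for step: the reduction via \cref{pro:9} to a single operator-norm bound on $\hat{\Sigma}-\Sigma$, the change of basis to $\Lambda^{1/2}(n^{-1}\bm{Z}^T\bm{Z}-I)\Lambda^{1/2}$, the $2\times 2$ block partition at index $m$ with the geometric-mean weight $\sqrt{\lambda_1\lambda_{m+1}}$ on the cross block, and the appeal to \cref{pro:11} (resp. Bai--Yin, \cref{pro:10}) to control the three blocks. The only cosmetic difference is that the paper peels off the $-I$ from the lower-right block (so $\|\Lambda_\bot\|=\lambda_{m+1}$ appears as a standalone summand in \cref{eq:33}) whereas you absorb it into $\|C\|$; since $d>n$ this contributes the same $d\lambda_{m+1}/(n\lambda_1)$ term and the final bounds coincide.
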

\begin{proof}
	In order to apply Proposition \ref{pro:9}, we need to figure out an upper bound for
	$\|\hat{\Sigma} - \Sigma\|$. Remark that $\Sigma = U\Lambda U^T$ and
	$\hat{\Sigma} = n^{-1}\bm{X}^T\bm{X} = n^{-1}U\Lambda^{1/2}\bm{Z}^T\bm{Z}
	\Lambda^{1/2} U^T$. Then, we have the following chain of estimates, as $\|U\|=\|U^T\|=1$
	\begin{align}
	\label{eq:30}
	\|\hat{\Sigma} - \Sigma\|%
	=\|U( n^{-1} \Lambda^{1/2}\bm{Z}^T\bm{Z}\Lambda^{1/2} - \Lambda ) U^T \|%
	= \Big\|\Lambda^{1/2}\Big(\frac{\bm{Z}^T\bm{Z}}{n} - I\Big)\Lambda^{1/2}\Big\|.
	\end{align}
	We split the space $\Reals^d$ onto two orthogonal subspaces, corresponding to
	projection on $S\coloneqq \mathrm{span}(e_1,\dots,e_{m})$ and
	$S_{\bot} = \mathrm{span}(e_{m+1},\dots,e_d)$. Then, we let $\Lambda_S$,
	respectively $\bm{Z}_S$, denote the restriction of $\Lambda$ to $S$,
	respectively $\bm{Z}$. Similarly we let $\Lambda_{\bot}$, respectively
	$\bm{Z}_{\bot}$ the restrictions to $S_{\bot}$. Then we rewrite by blocks,
	\begin{equation}
	\label{eq:31}
	\Lambda^{1/2}\Big(\frac{\bm{Z}^T\bm{Z}}{n} - I\Big)\Lambda^{1/2}%
	=%
	\begin{pmatrix}
	\Lambda_S^{1/2}(n^{-1}\bm{Z}_S^T\bm{Z}_S - I)\Lambda_S^{1/2} &
	n^{-1}\Lambda_{\bot}^{1/2}\bm{Z}_{\bot}^T\bm{Z}_S\Lambda_S^{1/2}\\
	n^{-1}\Lambda_S^{1/2}\bm{Z}_S^T\bm{Z}_{\bot}\Lambda_{\bot}^{1/2} &
	\Lambda_{\bot}^{1/2}(n^{-1}\bm{Z}_{\bot}^T\bm{Z}_{\bot} - I)\Lambda_{\bot}^{1/2}
	\end{pmatrix}.
	\end{equation}
	Hence, combining the expressions \cref{eq:30,eq:31}, we can bound
	$\|\hat{\Sigma} - \Sigma\|$ as
	\begin{multline}
	\label{eq:46}
	\|\hat{\Sigma} - \Sigma\|%
	\leq \Big\|\Lambda_S^{1/2}\Big(\frac{\bm{Z}_S^T\bm{Z}_S}{n} -
	I\Big)\Lambda_S^{1/2} \Big\|%
	+ \Big\|\Lambda_{\bot}^{1/2}\frac{\bm{Z}_{\bot}^T\bm{Z}_S}{n}\Lambda_S^{1/2}
	\Big\|\\%
	+ \Big\|\Lambda_S^{1/2}\frac{\bm{Z}_S^T\bm{Z}_{\bot}}{n}\Lambda_\bot^{1/2}
	\Big\|%
	+ \Big\|\Lambda_\bot^{1/2}\Big(\frac{\bm{Z}_\bot^T\bm{Z}_\bot}{n} -
	I\Big)\Lambda_\bot^{1/2} \Big\|.
	\end{multline}
	The rhs of the last display is bounded by,
	\begin{equation}
	\label{eq:47}
	\Big\|\Lambda_S^{1/2}\Big(\frac{\bm{Z}_S^T\bm{Z}_S}{n} -
	I\Big)\Lambda_S^{1/2} \Big\|%
	+ 2\Big\|\frac{\bm{Z}_S}{\sqrt{n}}\Lambda_S^{1/2} \Big\|
	\Big\|\frac{\bm{Z}_{\bot}}{\sqrt{n}}\Lambda_{\bot}^{1/2}  \Big\|%
	+
	\Big\|\Lambda_{\bot}^{1/2}\frac{\bm{Z}_{\bot}^T\bm{Z}_{\bot}}{n}\Lambda_{\bot}^{1/2}\Big\|
	+ \|\Lambda_{\bot}\|,
	\end{equation}
	which is in turn bounded by
	\begin{equation}
	\label{eq:32}
	\|\Lambda_S^{1/2}\|^2\Big\|\frac{\bm{Z}_S^T\bm{Z}_S}{n} - I \Big\|%
	+ 2 \|\Lambda_S^{1/2}\|\|\Lambda_{\bot}^{1/2}\|
	\Big\|\frac{\bm{Z}_S}{\sqrt{n}}\Big\| \Big\|\frac{\bm{Z}_{\bot}}{\sqrt{n}}
	\Big\| + \|\Lambda_{\bot}^{1/2}\|^2
	\Big\|\frac{\bm{Z}_{\bot}^T\bm{Z}_{\bot}}{n} \Big\| + \|\Lambda_{\bot}\|.
	\end{equation}
	Since $\|\Lambda_S^{1/2}\| = \sqrt{\lambda_1}$, $\|\lambda_{\bot}^{1/2}\| =
	\sqrt{\lambda_{m+1}}$ and $\|\lambda_{\bot}\| = \lambda_{m+1}$, we deduce that
	\begin{equation}
	\label{eq:33}
	\|\hat{\Sigma} - \Sigma\|%
	\leq \lambda_1 \Big\| \frac{\bm{Z}_S^T\bm{Z}_S}{n} - I \Big\|%
	+ 2 \sqrt{\lambda_1
		\lambda_{m+1}}\sigma_{\max}\Big(\frac{\bm{Z}_S}{\sqrt{n}}\Big)%
	\sigma_{\max}\Big(\frac{\bm{Z}_{\bot}}{\sqrt{n}} \Big)%
	+ \lambda_{m+1}\sigma_{\max}\Big(\frac{\bm{Z}_{\bot}}{\sqrt{n}} \Big)^2%
	+ \lambda_{m+1}.
	\end{equation}
	We now consider only \cref{item:ev:2}. On the event that
	$\sigma_{\max}(n^{-1/2}\bm{Z}_S) \leq 1 + C\sqrt{m/n} + \sqrt{t/c}$ and
	$\sigma_{\min}(n^{-1/2}\bm{Z}_S) \geq 1 - C\sqrt{m/n} - \sqrt{t/c}$, it
	is easily seen that
	$\|n^{-1}\bm{Z}_S^T\bm{Z}_S - I\| \leq C\sqrt{m/n} + \sqrt{t/c}$; see
	for instance \citet[Lemma~5.36]{V10}. Further, by Proposition \ref{pro:11} this event has
	probability at least $1-4\exp(- t)$ for appropriate choice of $C,c > 0$. The
	other terms are also bounded using Proposition \ref{pro:11}. In fact, using that
	$0 < t \leq n$ and $d \geq n$, we can show that there exists a constant
	$K > 0$ depending only on $\nu$ such that with probability at least $1-e^{-t}$
	(by eventually increasing the constants if needed),
	\begin{equation}
	\label{eq:22}
	\|\hat{\Sigma} - \Sigma\| \leq K\lambda_1\Big(\sqrt{\frac{m}{n}} \bigvee
	\sqrt{\frac{t}{n}} \bigvee%
	\sqrt{\frac{d\lambda_{m+1}}{n\lambda_1}} \bigvee \frac{d
		\lambda_{m+1}}{n\lambda_1}\Big).
	\end{equation}
	The result follows by combining the last display with Proposition \ref{pro:9}. The proof
	for \cref{item:ev:1} is similar but uses Proposition \ref{pro:10} instead of
	Proposition \ref{pro:11}, or could be derived from the results in \citet{SSM16}.
\end{proof}

\subsection{Bias variance decomposition  of risk}
\label{decomposition_expanded}
To show \eqref{decomposition}, we start 
with adding and subtracting 
$x_{new}^T \E [\hat{\theta} \mid X]$ 
\begin{align}
\E [(x_{new}^T \theta_* - x_{new}^T \hat{\theta})^2 \mid X] %
=
\E [(x_{new}^T \theta_* - x_{new}^T \E [\hat{\theta} \mid X])^2 \mid X] + 
\E [(x_{new}^T \E [\hat{\theta} \mid X] - x_{new}^T \hat{\theta})^2 \mid X],
\end{align}
note that expectation of cross term is zero. 

Since 
\begin{equation}
\E [\hat{\theta} \mid X] = 
\E [(X^TX)^\dagger XY \mid X] =
(X^TX)^\dagger X^TX \theta_*,
\end{equation}
the first term which is bias becomes
\begin{align}
\E [(x_{new}^T \theta_* - x_{new}^T \E [\hat{\theta} \mid X])^2 \mid X]
=
\E [(x_{new}^T (\theta_* - (X^TX)^\dagger X^TX
\theta_*))^2 \mid X]
\\= 
\E [(x_{new}^T (I - (X^TX)^\dagger X^TX
)\theta_*)^2 \mid X].
\end{align}
Next we expand the square above. By definition
\begin{align}
\E [ x_{new} x_{new}^T \mid X] = \Sigma 
\end{align}
and $ \frac{1}{n} X^T X = \hat{\Sigma}$.
So we can write bias
\begin{align}
\E [(x_{new}^T \theta_* - x_{new}^T \E [\hat{\theta} \mid X])^2 \mid X]
=
\Tr(
\theta_*^T (I - \hat{\Sigma}^\dagger \hat{\Sigma}) 
\Sigma
(I - \hat{\Sigma}^\dagger \hat{\Sigma}) \theta_*
).
\end{align}
Similarly for variance we get
\begin{align}
\E [(x_{new}^T \E [\hat{\theta} \mid X] - x_{new}^T \hat{\theta})^2 \mid X] 
=
\E [ (x_{new}^T ((X^TX)^\dagger X^T (X \theta_* -  Y)))^2 \mid X]
\\=
\E [ (x_{new}^T ((X^TX)^\dagger X^T \xi))^2 \mid X]
= 
\E [x_{new}^T (X^TX)^\dagger X^T \xi 
\xi^T X (X^T X)^\dagger x_{new} \mid X]
\\ \le 
\sigma ^2 
\E [x_{new}^T (X^TX)^\dagger X^T
X (X^T X)^\dagger x_{new} \mid X]
=
sigma ^2 
\E [x_{new}^T (X^TX)^\dagger x_{new} \mid X]
= 
\frac {\sigma^2}{n} \Tr(
\hat{\Sigma}^\dagger
\Sigma
),
\end{align}
where the second to last equality follows from 
definition of pseudo-inverse.

\subsection{Random matrix facts}
\label{sec:random-matrix-facts-1}

The following useful proposition combines famous results from
\citet{yin1988limit,bai1993limit} about the asymptotic behavior or large
covariance matrices, see also \citet[Theorem~2.1]{V10}.

\begin{proposition}[Bai-Yin's law]
	\label{pro:10}
	Let $\bm{W}$ be a $n \times p$, $n > p$, matrix with i.i.d entries $W_{i,j}$
	such that $\EE[W_{i,j}] = 0$, $\EE[W_{i,j}^2] = 1$, and
	$\EE[W_{i,j}^4]<\infty$. Let $y = \lim_{n\to \infty} p/n$. Then
	$\lim_{n\to\infty} \sigma_{\max}(n^{-1/2}\bm{W}) = 1 + \sqrt{y}$ and
	$\lim_{n\to \infty} \sigma_{\min}(n^{-1/2}\bm{W})= 1 - \sqrt{y}$
	almost-surely.
\end{proposition}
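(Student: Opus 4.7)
The plan is to prove this by reducing to the spectrum of the Wishart-type matrix $M \coloneqq n^{-1}\bm{W}\bm{W}^T$, whose eigenvalues are exactly the squared singular values of $n^{-1/2}\bm{W}$, and then proving convergence of the extreme eigenvalues of $M$ to the edges $(1 \pm \sqrt{y})^2$ of the Marchenko-Pastur bulk. The main tool is the moment method combined with a truncation argument, since \Cref{pro:10} only assumes a fourth moment rather than sub-Gaussianity.

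First I would establish the upper bound $\limsup_n \sigma_{\max}(n^{-1/2}\bm{W}) \leq 1+\sqrt{y}$ almost-surely. The strategy is to use the trace inequality $\sigma_{\max}(M)^k \leq \Tr(M^k)$, and to bound $\EE[\Tr(M^k)]$ via the combinatorial expansion $\EE[\Tr(M^k)] = n^{-k}\sum \EE[W_{i_1 j_1} W_{i_2 j_1} W_{i_2 j_2} \cdots]$ indexed by closed walks on a bipartite graph with $n+p$ vertices. A classical counting argument (as in Yin's paper) shows that only pair-partitions of the $2k$ edges contribute to leading order, and the number of such non-crossing pairings is counted by generating Narayana-type polynomials whose total for fixed $k$ equals $\sum_{r=0}^{k-1}\tfrac{1}{r+1}\binom{k}{r}\binom{k-1}{r} y^r$, which are exactly the moments of the Marchenko-Pastur distribution at aspect ratio $y$; these are bounded by $(1+\sqrt{y})^{2k}$ times a polynomial factor in $k$. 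Choosing $k = k(n)$ growing slowly with $n$ (say $k \sim \log n$) and applying Markov's inequality then Borel-Cantelli yields the almost-sure upper bound.

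Next I would handle the almost-sure lower bound $\liminf_n \sigma_{\min}(n^{-1/2}\bm{W}) \geq 1 - \sqrt{y}$, which is the more delicate half. Here one cannot use a naive trace inequality because $\sigma_{\min}(M)^k$ is not trace-controllable. Instead, I would follow Bai and Yin's approach via the Stieltjes transform: establish weak convergence of the empirical spectral distribution $\mu_n \coloneqq p^{-1}\sum_{j=1}^p \delta_{\lambda_j(M)}$ to the Marchenko-Pastur law $\mu_{MP,y}$, which has support $[(1-\sqrt{y})^2, (1+\sqrt{y})^2]$, and then argue separately that no eigenvalues escape below this lower edge. This second step requires showing that for any $\varepsilon > 0$, $\PP(\sigma_{\min}(M) \leq (1-\sqrt{y}-\varepsilon)^2 \text{ i.o.}) = 0$; this can be done by a moment-type bound on $\Tr((M - (1-\sqrt{y})^2 I)_-^{2k})$ combined with the fact that the target distribution has zero density at the lower edge.

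The two main obstacles are the \emph{lower} bound on $\sigma_{\min}$ and the \emph{almost-sure} (rather than in-probability) nature of the convergence under only a fourth moment hypothesis. For the latter, I would invoke a truncation: replace $W_{i,j}$ by $\tilde W_{i,j} \coloneqq W_{i,j}\mathbb{I}[|W_{i,j}|\leq n^{1/4}] - \EE[W_{i,j}\mathbb{I}[|W_{i,j}|\leq n^{1/4}]]$, so that the truncated entries are bounded, have variance $1+o(1)$, and $\sum_n \PP(W_{i,j} \neq \tilde W_{i,j}\text{ for some }i,j) < \infty$ by the fourth-moment assumption and Borel-Cantelli. On the truncated matrix one has strong concentration of $\Tr(M^k)$ around its mean (the variance of each trace moment is $O(n^{-1})$ up to polynomial factors in $k$), allowing Borel-Cantelli to upgrade convergence in probability to almost-sure convergence. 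The final assembly combines the upper and lower bounds with the elementary identity $\sigma_{\max}(n^{-1/2}\bm{W})^2 = \lambda_{\max}(M)$ and $\sigma_{\min}(n^{-1/2}\bm{W})^2 = \lambda_{\min}(M)$ (valid since $n \geq p$, so $M$ has full rank almost-surely).
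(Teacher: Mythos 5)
The paper does not give a proof of this proposition at all: it states it as a quotation from the literature, citing \citet{yin1988limit}, \citet{bai1993limit}, and \citet[Theorem~2.1]{V10}. So there is no argument of the authors' to compare yours against; you have set out to reprove a deep classical theorem that the paper simply imports. That said, your outline has two concrete gaps as written.

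First, the reduction is to the wrong Gram matrix. With $\bm{W}$ of size $n\times p$ and $n>p$, your $M = n^{-1}\bm{W}\bm{W}^T$ is $n\times n$ of rank at most $p<n$, so $\lambda_{\min}(M)=0$ deterministically and the identity $\sigma_{\min}(n^{-1/2}\bm{W})^2=\lambda_{\min}(M)$ together with the claim that ``$M$ has full rank almost-surely'' is false. You need to work with the $p\times p$ matrix $n^{-1}\bm{W}^T\bm{W}$, which is almost-surely nonsingular and whose $p$ eigenvalues are exactly the squared singular values of $n^{-1/2}\bm{W}$; your definition of the empirical spectral distribution $\mu_n = p^{-1}\sum_{j=1}^p\delta_{\lambda_j(M)}$ already implicitly assumes this, so it is a fixable slip, but the final assembly step does not go through as stated.

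Second, the truncation step does not deliver the Borel--Cantelli bound you claim. With $np\asymp n^2$ entries and $\PP(|W_{11}|>n^{1/4})\lesssim \EE[W_{11}^4]\,n^{-1}$, the union bound gives $\PP(\exists\,(i,j)\colon W_{i,j}\neq\tilde W_{i,j})\lesssim n\,\EE[W_{11}^4]$, which does not even tend to zero, let alone sum over $n$. The expected number of entries exceeding $n^{1/4}$ is of order $n$, so the truncated and untruncated matrices differ in growing numbers of entries. The classical arguments of Yin and Bai--Yin truncate at level $\epsilon_n\sqrt{n}$ with $\epsilon_n\to 0$ slowly and control the effect of truncation via spectral-norm perturbation estimates (not by showing the matrices eventually coincide), and the passage from in-probability to almost-sure convergence then proceeds through variance bounds on trace moments combined with a subsequence argument; the ``all-or-nothing'' truncation you propose cannot replace that machinery. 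The rest of your outline (moment method for the upper edge, Stieltjes-transform or edge-repulsion argument for the lower edge) is the right shape of the classical proof, but both of these gaps would need to be repaired before the sketch could be turned into a proof.
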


The following proposition is copied from \citet[Theorem~5.39]{V10}.

\begin{proposition}
	\label{pro:11}
	Let $\bm{W}$ be a $n \times p$, $n > p$, matrix with i.i.d entries $W_{i,j}$ such that
	$\EE[W_{i,j}] = 0$, $\EE[W_{i,j}^2] = 1$ and there exists $\nu > 0$ such that
	$\log \EE[e^{\lambda W_{i,j} }] \leq \frac{\lambda^2\nu}{2}$ for all
	$\lambda \in \Reals$. Then there are constants $C,c > 0$ depending only on
	$\nu$ such that with probability at least $1 - 2\exp(-ct^2)$ one has
	\begin{equation}
	\label{eq:21}
	\sqrt{n} - C\sqrt{p} - t \leq \sigma_{\min}(\bm{W}) \leq
	\sigma_{\max}(\bm{W}) \leq \sqrt{n} + C\sqrt{p} + t.
	\end{equation}
\end{proposition}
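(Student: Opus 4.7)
The plan is to follow the standard $\varepsilon$-net approach for extreme singular values of a matrix with sub-gaussian entries (this is essentially the proof of \citet[Theorem~5.39]{V10}). There are three ingredients: (i) one-point concentration of $\|\bm{W}x\|^2$ around $n$ for each fixed unit vector $x$; (ii) passage from a single vector to the whole unit sphere through a net approximation; (iii) a union bound balancing the net's cardinality against the single-point tail.

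First, I would fix $x \in \mathbb{S}^{p-1}$ and observe that, since the entries of the row $W_i$ are independent sub-gaussians with variance proxy $\nu$, the linear combination $W_i^T x$ is sub-gaussian with variance proxy $\nu \|x\|^2 = \nu$; consequently $(W_i^T x)^2$ is sub-exponential with mean $1$. Applying Bernstein's inequality for sums of independent centered sub-exponentials to $\|\bm{W}x\|^2 - n = \sum_{i=1}^n\big((W_i^T x)^2 - 1\big)$ yields
\begin{equation}
\mathbb{P}\Big(\, \big|\,\|\bm{W}x\|^2 - n \,\big| \geq s\, \Big) \leq 2\exp\!\Big(-c\min\!\Big(\tfrac{s^2}{n}, s\Big)\Big),
\end{equation}
for a constant $c>0$ depending only on $\nu$.

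Second, I would choose a $1/4$-net $\mathcal{N}$ of the sphere $\mathbb{S}^{p-1}$; a standard volume argument gives $|\mathcal{N}| \leq 9^p$. The approximation lemma then yields $\sigma_{\max}(\bm{W}) \leq 2 \max_{x\in \mathcal{N}}\|\bm{W}x\|$ and, symmetrically, $\sigma_{\min}(\bm{W}) \geq \min_{x \in \mathcal{N}}\|\bm{W}x\| - \tfrac{1}{4}\sigma_{\max}(\bm{W})$. A union bound of the single-point inequality over $\mathcal{N}$ with deviation $s$ of order $\sqrt{n}\,(C\sqrt{p} + t)$ gives a failure probability no larger than $9^p \cdot 2 e^{-c(C\sqrt{p}+t)^2}$, which is bounded by $2e^{-c' t^2}$ provided $C$ is large enough relative to $c$ and $\log 9$. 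To convert the squared-norm concentration into a bound on $\|\bm{W}x\| - \sqrt{n}$ itself, I would use the elementary estimate $|\sqrt{a} - \sqrt{b}| \leq \sqrt{|a-b|}$ on the Gaussian regime and $|\sqrt{a}-\sqrt{b}| \leq |a-b|/(\sqrt{a}+\sqrt{b})$ on the exponential regime of Bernstein. This produces exactly the two-sided bound $\sqrt{n} - C\sqrt{p} - t \leq \sigma_{\min}(\bm{W}) \leq \sigma_{\max}(\bm{W}) \leq \sqrt{n} + C\sqrt{p} + t$.

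The main obstacle is essentially bookkeeping: one must pick the deviation parameter $s$ so that (a) the Bernstein exponent $\min(s^2/n, s)$ dominates the $p\log 9$ entropy cost of the net, and (b) after taking a square root, the factor of $2$ from the net approximation is absorbed into $C$. Both points are routine but need attention in the exponential regime of Bernstein (large $s$ relative to $n$), which is where the linear-in-$t$ term in the conclusion ultimately comes from; the Gaussian regime supplies the $\sqrt{p}$ term. Once the calibration of constants is done on one side, the opposite side follows symmetrically from the two-sided form of the one-point inequality.
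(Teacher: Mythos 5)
The paper does not give a proof of this proposition; it states explicitly that it is copied from \citet[Theorem~5.39]{V10}, and leaves the argument to that reference. Your sketch is a correct rendering of the standard Vershynin proof behind that citation --- one-point Bernstein concentration for $\|\bm{W}x\|^2-n$, an $\varepsilon$-net of the sphere with $9^p$ cardinality bound, a union bound, and the square-root conversion from squared norm to norm --- so it matches the paper's source; the only nit is that the factor $2$ in $\sigma_{\max}(\bm{W})\le 2\max_{x\in\mathcal{N}}\|\bm{W}x\|$ comes from Vershynin's quadratic-form lemma applied to the symmetric matrix $n^{-1}\bm{W}^T\bm{W}-I$ with a $1/4$-net, whereas the direct net lemma for $\|\bm{W}x\|$ gives $4/3$; either choice is fine after absorbing constants.
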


\end{document}